\title{Coupling-based Convergence Diagnostic and Stepsize Scheme for Stochastic Gradient Descent}
\author {
    Xiang Li and Qiaomin Xie
}
\newtheorem{assumption}{Assumption}
\newtheorem{theorem}{Theorem}
\newtheorem{claim}{Claim}
\newtheorem{proposition}{Proposition}
\newtheorem{lemma}{Lemma}
\newcommand{\N}{\mathbb{N}}
\newcommand{\R}{\mathbb{R}}
\newcommand{\E}{\mathbb{E}}
\newcommand{\CP}{\mathcal{P}}
\newcommand{\CB}{\mathcal{B}}
\global\long\def\norm#1{\left\Vert #1\right\Vert }%
\begin{document}

\maketitle

\begin{abstract}

The convergence behavior of Stochastic Gradient Descent (SGD) crucially depends on the stepsize configuration. When using a constant stepsize, the SGD iterates form a Markov chain, enjoying fast convergence during the initial transient phase. However, when reaching stationarity, the iterates oscillate around the optimum without making further progress. In this paper, we study the convergence diagnostics for SGD with constant stepsize, aiming to develop an effective dynamic stepsize scheme. We propose a novel \emph{coupling}-based convergence diagnostic procedure, which monitors the distance of two coupled SGD iterates for stationarity detection. Our diagnostic statistic is simple and is shown to track the transition from transience stationarity theoretically. 
We conduct extensive numerical experiments and compare our method against various existing approaches. Our proposed coupling-based stepsize scheme is observed to achieve superior performance across a diverse set of convex and non-convex problems. Moreover, our results demonstrate the robustness of our approach to a wide range of hyperparameters.   
\end{abstract}

\section{Introduction}\label{sec: Intro}

Stochastic Gradient Descent (SGD) aims to minimize an objective function by iteratively updating the model parameters based on noisy gradients of the function  \cite{robbins1951stochastic}. SGD is well suited for large-scale machine learning scenarios, where the data are abundant and the computation is limited, as it processes data points one at a time or in small batches \cite{bottou2010large, bottou2012stochastic}. 

Mathematically, the SGD recursion is given by
\begin{equation}\label{eq: 1}
\theta_{k+1} = \theta_k - \gamma_k \nabla f_{k}(\theta_k),
\end{equation}
Here, $\theta_k$ is the model parameters at iteration $k$,
$\nabla f_{k}(\theta_k)$ is the noisy gradient of the objective function $f$ at $\theta_k$, and $\gamma_k > 0$ is the stepsize (also known as the learning rate) for the $k$th iteration.

The stepsize configuration plays a critical role in the convergence of SGD. A traditional approach is to use diminishing stepsize, such as $\gamma_k=\gamma_0/k$ or $\gamma_0/\sqrt{k}$ \cite{robbins1951stochastic, bach2014adaptivity}. These decay stepsize schedules have been well studied, with non-asymptotic convergence guarantees \cite{moulines2011non, lacoste2012simpler, rakhlin2011making, shamir2013stochastic, JMLR:v15:hazan14a}. However, these schedules are often less robust to ill-conditioning and tend to suffer from a slow convergence rate. 

In practice, constant stepsize SGD is widely used, due to faster convergence and easy tuning. Prior work on constant stepsize SGD \cite{needell2014stochastic} provides a classical bias-variance decomposition of the mean-square error. Recent work provides a more precise characterization of its properties~\cite{dieuleveut2020bridging,yu2021nonconvex,lauand2023curse,merad2023sgd}, by leveraging the fact that the SGD iterates $(\theta_k)_{k\geq0}$ defined by \eqref{eq: 1} form a Markov chain with a constant stepsize $\gamma_k\equiv \gamma>0$. The Markov chain $(\theta_k)_{k\geq0}$ is shown to converge at an exponential rate that is proportional to the stepsize $\gamma.$ However, the chain does not converge to the optimal solution $\theta^\star$ but exhibits as a random walk within a vicinity of $\theta^\star$ of radius $\mathcal{O}(\sqrt{\gamma})$ as $k\rightarrow \infty$. 

The insights imply that fast convergence can be achieved by using a large constant stepsize; when the iterates reach stationarity,  reduce the stepsize and continue the iteration to achieve a better accuracy at the next saturation, upon which the procedure is repeated.  Indeed a common practice for SGD and deep learning algorithms is to use epoch-wise constant stepsize and periodically reduce it \cite{he2016deep, krizhevsky2012imagenet,wang2021convergence}, but the epoch lengths are mostly pre-determined and hand-picked. To make the best use of the procedure to achieve the best of both worlds, it is critical to develop a saturation diagnostic that effectively and promptly detects stationarity, while being computationally efficient. 

The problem of detecting saturation in constant stepsize SGD has been studied since the seminal work of \citet{pflug1983determination}, who proposed using the running average of the inner products of successive gradients to monitor convergence. Pflug's statistic has been used to develop several diagnostic algorithms for SGD \cite{chee2018convergence,yaida2018fluctuationdissipation}. However, recent work has shown that Pflug's method may fail even for quadratic functions, due to the large variance of the diagnostic statistic \cite{pesme2020distance}. In particular, the noisy signal fails to accurately indicate whether saturation has been reached, leading to frequent early stepsize reduction. Improving upon the diagnostic proposed by \citet{yaida2018fluctuationdissipation}, \citet{lang2019using} developed a more robust detection rule based on gradients of a mini-batch. Their approach, however, tends to suffer from the opposite issue, i.e., being overly conservative and leading delays in stationary detection. 

The more recent work in \citet{pesme2020distance} introduced an alternative method based on the distance between the initial iterate $\theta_0$ and the $k$th iterate $\theta_k$. 
The idea is that $\|\theta_k-\theta^\star\|^2$ is expected to reach saturation at approximately the same time as $\|\theta_k-\theta_0\|^2$. Their approach outperforms Pflug's methods empirically, but lacks theoretical justifications for general convex problems. A closely related work in \citet{sordello2020robust} considers splitting the SGD recursion into multiple threads, which are initialized from the same $\theta_0$ but use independent data points to calculate the gradients. Near-orthogonality of these gradients is used as an indicator of stationarity. Note that each thread only uses a fraction of the available data points.
 
In this paper, we propose a new, coupling-based convergence diagnostic procedure that is simple, flexible, and data-efficient. We apply this method to develop an effective dynamic stepsize scheme for SGD. Our main contributions are summarized as follows. First, building on the fact that the constant stepsize SGD iterates $(\theta_k)_{k\geq0}$ form a Markov chain, we design a stationarity diagnostic statistic via \emph{Markov chain coupling}. Specifically, our coupling-based method maintains two SGD iterates $(\theta_k^{(1)})_{k\geq0}$ and $(\theta_k^{(2)})_{k\geq0}$ using the same stepsize and data points at each iteration (``coupling"), but with different initialization. To assess whether the iterates have reached stationary phase, we perform a diagnosis based on the ratio of coupled iterates difference ${\|\theta^{(1)}_{k} - \theta^{(2)}_{k}\|^2}/{\|\theta^{(1)}_{0} - \theta^{(2)}_{0}\|^2}.$ Once stationarity is detected, the stepsize is reduced, and the procedure is repeated. 
Our stationarity detection is simple and easy to implement. Moreover, we prove that our diagnostic statistic is valid for convergence detection in general convex problems. Furthermore, we conduct extensive experiments to evaluate our coupling-based dynamic stepsize scheme against various existing methods. Our approach consistently achieves superior performance across a range of convex and non-convex problems, such as logistic regression and ResNet-18, as well as in stochastic approximation with Markovian data. Lastly, our results demonstrate the robustness of our approach to a wide range of hyper-parameters.

\section{Preliminaries}\label{sec:preliminary}

In this paper, we focus on the classical SGD algorithm for minimizing an objective function $f: \mathbb{R}^d \rightarrow \mathbb{R}.$ With an initialization $\theta_0\in \R^d$, the SGD recursion~\eqref{eq: 1} can be written equivalently as 
\begin{equation}\label{eq: SGD}
\theta_{k+1} = \theta_k - \gamma_k \big[\nabla f(\theta_k) + \varepsilon_{k}(\theta_k)\big],
\quad k\geq 0,
\end{equation}
where $\varepsilon_k:\R^d \rightarrow \R^d$ is the random field corresponding to the 
stochasticity in $\nabla f_k (\cdot)$ as an estimate of the true gradient $\nabla f(\cdot)$, such that $\nabla f_k (\cdot) = \nabla f (\cdot)+ \varepsilon_k (\cdot).$

For our analysis, we consider the following assumptions on the loss function $f$ and the noise functions $(\varepsilon_k)_{k\ge0}$.

    \begin{assumption}[Smoothness]\label{assumption: L-smooth}
    The function $f$ is $L$-smooth with $L \geq 0$, i.e.,  for all $\theta,\theta' \in \mathbb{R}^d$,
    \begin{equation*}
    \|\nabla f(\theta)-\nabla f(\theta')\| \leq L\|\theta-\theta'\|.
    \end{equation*}
    \end{assumption}

    \begin{assumption}[Strong convexity]\label{assumption: convex} The function $f$ is strongly convex with parameter $\mu>0,$ i.e., $\forall\theta,\theta' \in \mathbb{R}^d$,
    \begin{equation*}
    \langle \nabla f(\theta)-\nabla f(\theta'),\theta -\theta' \rangle\geq \mu\|\theta - \theta'\|^2.
    \end{equation*}
    \end{assumption}

Assumption \ref{assumption: convex} implies that the objective function $f$ admits a unique global optimum $\theta^\star.$

    \begin{assumption}[Zero-mean noise]\label{assumption: unbiased}
    There exists a filtration $(\mathcal{F}_k)_{k\geq 0}$ on some probability space such that for all $k\in \mathbb{N}$ and $\theta \in \R^d,$ $\varepsilon_{k}(\theta)$ is $\mathcal{F}_k$-measurable and $\E[\varepsilon_k(\theta)\mid \mathcal{F}_{k-1}]=0.$ In addition, $(\varepsilon_k)_{k\geq 0}$ are independent and identically distributed random fields. 
    \end{assumption}

Assumption \ref{assumption: unbiased} implies that $\nabla f_{k}(\theta):= \nabla f(\theta)+\varepsilon_k(\theta)$ is an unbiased estimator of the true gradient $\nabla f(\theta)$. 

    \begin{assumption}[Co-coercivity]\label{assumption: bounded_variance}
    The gradient is co-coercive in expectation, i.e., $ \langle \nabla f(\theta) - \nabla f(\theta'), \theta - \theta' \rangle \ge (1/L) \mathbb{E}[ \| \nabla f_k(\theta) - \nabla f_k(\theta') \|^2], \forall k>0,\theta,\theta' \in \R$, where $L$ is given in Assumption~\ref{assumption: L-smooth}. Moreover, there exists a constant $\sigma \geq 0$ such that for all $k > 0$, $\mathbb{E}[\|\varepsilon_k(\theta^\star)\|^2] \leq \sigma^2$. 
    \end{assumption}

Assumption~\ref{assumption: bounded_variance} is standard and appeared in \citet{dieuleveut2020bridging, merad2023sgd}.

\paragraph{Transience-stationarity under constant stepsize.}
Constant stepsize SGD has recently gained increasing attention and its properties have been well understood for well-conditioned problems. In particular, the classical result \cite{needell2014stochastic} decomposes the bound on $\mathbb{E}\left[\|\theta_k - \theta^\star\|^2\right]$ into bias and variance terms: the bias term is proportional to the initial condition $\|\theta_0-\theta^\star\|$ and decays at an exponential rate; the variance term is determined by the gradient noise. In general, the iterates are not converging to the global optimum $\theta^\star$.

Recent work has provided a more precise characterization of the bias-variance trade-off through the lens of Markov chain theory \cite{dieuleveut2020bridging,yu2021nonconvex}. 
In particular, with a constant stepsize $\gamma_k \equiv \gamma>0,$ the SGD iterates $(\theta_k)_{k\geq 0}$ given by \eqref{eq: SGD} form a Markov chain. The Markov chain is shown to converge to a unique stationary distribution $\pi_{\gamma}$ under appropriate conditions. Let $\CP_2(\R^d)$ denote the set of probability measure on $(\R^d,\CB(\R^d))$ with a finite second moment, where $\CB(\R^d)$ is the Borel $\sigma$-field of $\R^d.$

\begin{proposition}
 [Proposition 2 in \citealt{dieuleveut2020bridging}]
 \label{prop:MC_convergence}
Suppose that Assumptions \ref{assumption: L-smooth}--\ref{assumption: bounded_variance} hold. With constant stepsize $\gamma \in (0,2/L),$ the Markov chain $(\theta_k)_{k\geq 0}$ given by the recursion \eqref{eq: SGD} satisfies:
\begin{equation}\label{eq: W2}
W_2^2\big(P^k_\gamma(\theta_0,\cdot),\pi_\gamma \big) \leq \rho^k \E_{\theta\sim\pi_{\gamma}}\big[\norm{\theta_0-\theta}^2\big],
\end{equation}
where $P^k_\gamma$ is the $k$-step Markov kernel for the chain $(\theta_k)_{k\geq 0}$, $W_2(\nu,\nu')$ is the Wasserstein distance of order two between measures $\nu,\nu'\in \CP_2(\R^d),$ and $\rho:=1-2\gamma \mu (1-\gamma L/2).$
\end{proposition}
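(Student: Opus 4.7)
The plan is to exploit the natural synchronous coupling of two SGD chains driven by the same stochastic gradients. I would construct $(\theta_k)_{k\ge 0}$ starting from the given $\theta_0$ and a second chain $(\tilde\theta_k)_{k\ge 0}$ initialized at $\tilde\theta_0\sim\pi_\gamma$, with both chains sharing the noise fields $(\varepsilon_k)_{k\ge 0}$. Since $\pi_\gamma$ is stationary, $\tilde\theta_k\sim\pi_\gamma$ for every $k$, so $(\theta_k,\tilde\theta_k)$ is a valid coupling of $P_\gamma^k(\theta_0,\cdot)$ and $\pi_\gamma$. Hence by the definition of the $W_2$ distance,
\begin{equation*}
W_2^2\big(P_\gamma^k(\theta_0,\cdot),\pi_\gamma\big)\leq \E\big[\norm{\theta_k-\tilde\theta_k}^2\big],
\end{equation*}
and it suffices to show the right-hand side contracts by a factor $\rho$ per step.

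Next, set $\Delta_k:=\theta_k-\tilde\theta_k$. Subtracting the two SGD recursions and expanding the square gives
\begin{equation*}
\norm{\Delta_{k+1}}^2=\norm{\Delta_k}^2-2\gamma\langle\Delta_k,\nabla f_k(\theta_k)-\nabla f_k(\tilde\theta_k)\rangle+\gamma^2\norm{\nabla f_k(\theta_k)-\nabla f_k(\tilde\theta_k)}^2.
\end{equation*}
Taking conditional expectation with respect to $\mathcal{F}_{k-1}$ and using Assumption~\ref{assumption: unbiased} (zero-mean noise) together with the $\mathcal{F}_{k-1}$-measurability of $\theta_k,\tilde\theta_k,\Delta_k$, the cross term involving $\varepsilon_k$ in the inner product vanishes, yielding
\begin{equation*}
\E[\norm{\Delta_{k+1}}^2\mid\mathcal{F}_{k-1}]=\norm{\Delta_k}^2-2\gamma\langle\Delta_k,\nabla f(\theta_k)-\nabla f(\tilde\theta_k)\rangle+\gamma^2\,\E\big[\norm{\nabla f_k(\theta_k)-\nabla f_k(\tilde\theta_k)}^2\mid\mathcal{F}_{k-1}\big].
\end{equation*}

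Now I would apply Assumption~\ref{assumption: bounded_variance} (co-coercivity in expectation) to bound the last term by $L\langle\Delta_k,\nabla f(\theta_k)-\nabla f(\tilde\theta_k)\rangle$, which combines with the inner product term to give the factor $-2\gamma(1-\gamma L/2)\langle\Delta_k,\nabla f(\theta_k)-\nabla f(\tilde\theta_k)\rangle$. Since $\gamma\in(0,2/L)$, this coefficient is negative, so invoking Assumption~\ref{assumption: convex} (strong convexity) with parameter $\mu$ bounds the inner product below by $\mu\norm{\Delta_k}^2$, which produces
\begin{equation*}
\E[\norm{\Delta_{k+1}}^2\mid\mathcal{F}_{k-1}]\leq\big(1-2\gamma\mu(1-\gamma L/2)\big)\norm{\Delta_k}^2=\rho\,\norm{\Delta_k}^2.
\end{equation*}
Taking total expectation, iterating from $k$ down to $0$, and substituting back into the $W_2$ bound yields the claim. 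The only subtle point I expect is justifying that the co-coercivity inequality, stated in Assumption~\ref{assumption: bounded_variance} for deterministic points, can be applied to the random but $\mathcal{F}_{k-1}$-measurable iterates $\theta_k,\tilde\theta_k$; this follows because the $\varepsilon_k$ noise is i.i.d.\ and independent of $\mathcal{F}_{k-1}$, so the expectation in the co-coercivity definition can be interpreted as a conditional expectation at the random arguments.
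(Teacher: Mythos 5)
Your proposal is correct and is exactly the synchronous-coupling contraction argument that underlies this result (and that the paper reuses in its proof of Claim~\ref{claim:W2_quadratic} for the quadratic case): couple the chain with a stationary copy sharing the same noise, cancel the cross term via Assumption~\ref{assumption: unbiased}, absorb the quadratic term via co-coercivity, and contract via strong convexity to get the factor $\rho=1-2\gamma\mu(1-\gamma L/2)$. The only point worth flagging is that your argument takes the existence of the invariant measure $\pi_\gamma$ with finite second moment as given; the full statement in the cited reference also establishes that existence (via the same contraction and a fixed-point argument on $\CP_2(\R^d)$), which is implicitly assumed here.
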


Proposition \ref{prop:MC_convergence} shows that during the transient phase, i.e., before reaching stationarity, the Markov chain converges exponentially fast at a rate proportional to the stepsize $\gamma.$ Consequently, using a larger stepsize achieves faster transient convergence. However, when reaching stationary phase, the iterates exhibit a random walk around $\theta^\star$, incurring a \emph{saturated} expected error $\E_{\pi_\gamma}[\theta]-\theta^\star=A\gamma+O(\gamma^2)$ \cite[Theorem 4]{dieuleveut2020bridging}. Therefore, a larger stepsize suffers from a larger approximation error. Similar transience-stationarity properties have also been characterized for general stochastic approximation algorithms with Markovian data, which covers SGD as a special case \cite{lauand2023curse,huo2024collusion}.

This above insight on the transient phase convergence rate and the stationary phase error naturally indicates an \emph{adaptive constant stepsize scheduling} procedure to achieve the best of both worlds: Start with a large constant stepsize for fast transient convergence; when the iterates have approximately reached stationarity, decrease the stepsize and continue the SGD process to achieve smaller approximation error. 

The key challenge here is how to detect the stationarity of the iterates effectively, which has remained an open problem in the literature \cite{pasupathy2019open_problem}. 
Several statistical tests have been proposed for stationarity diagnostic \cite{chee2018convergence,yaida2018fluctuationdissipation,pesme2020distance}. However, these methods either lead to poor empirical results or lack of theoretical analysis. There is a critical need for a more efficient and principled tool to assess the convergence of SGD iterates with constant stepsize. In the next section, we present our approach that uses Markov chain coupling algorithmically.

\section{A Coupling-based Statistic for Stationarity Diagnostic}\label{sec:algorithm}

In this section, we introduce a novel coupling-based statistic for convergence diagnostic and present the theoretical foundation for justify its effectiveness. 

Our approach builds on the fact that the SGD iterates $(\theta_k)_{k\geq 0}$ forms a Markov chain under a constant stepsize. We propose a diagnostic that detects the convergence of the Markov chain $(\theta_k)_{k\geq 0}$ by \emph{coupling}. 
Stochastic coupling techniques have served as a powerful \emph{analytical} tool for characterizing stochastic system performance. In particular, it has been widely employed to establish Markov chain convergence \cite{dieuleveut2020bridging,huo2023bias,foss-coupling,lauand2023curse}, which serves as an inspiration for our approach. 

In particular, consider two  SGD iterates $(\theta_k^{(1)})_{k\geq 0}$ and $(\theta_k^{(2)})_{k\geq 0}$ that use the same stepsize $\gamma$ and the same mini-batch data at every iteration, i.e., \emph{sharing the same noise}, but with different initialization $\theta_0^{(1)}$ and $\theta_0^{(2)}$. This coupling of the two Markov chains $(\theta_k^{(1)})_{k\geq 0}$ and $(\theta_k^{(2)})_{k\geq 0}$ has played an important role in showing the convergence of the iterates to their unique stationary distribution $\pi_{\gamma}$; e.g., see the proof of Proposition 2 by \citet{dieuleveut2020bridging}. 

Here we propose to employ the coupling technique algorithmically to construct a statistical test for stationarity diagnostic.  
Let $D_{k}:=\theta_{k}^{(1)}-\theta_{k}^{(2)}$ denote their difference. Intuitively, the distributional convergence of the iterates $(\theta_{k}^{(1)})_{k\geq0}$ and $(\theta_{k}^{(2)})_{k\geq0}$ implies that $\norm{D_{k}}\approx0$ if $\theta_{k}^{(1)}$ and $\theta_{k}^{(2)}$ reach the stationary phase, thanks to the coupling procedure. This observation suggests a natural criterion for the stationarity of the primary sequence $\theta_{k}^{(1)}$ via tracking the ratio of the differences, ${\norm{D_{k}}}/{\norm{D_{0}}}$, with the auxiliary sequence $\theta_{k}^{(2)}$. This naturally leads to the following convergence diagnostic: if $\norm{D_{k}}/\norm{D_{0}}$ is smaller than some threshold, then the stepsize is decreased by a factor of $r\in(0,1)$, upon which we reinitialize one sequence $\theta_{k}^{(2)}$ and repeat this procedure. 

Our intuition is clearly illustrated in Figure \ref{fig:Dk}. Importantly, we observe that when ratio of the distance $\norm{D_{k}}/\norm{D_0}$ decreases to a small value, the SGD iterates $(\theta_k^{(1)})_{k\geq0}$ approximately enter the stationary phase. 
A natural question arises: Can we theoretically justify that the behavior of the difference $D_k$ accurately reflects the transition of iterates $(\theta^{(1)}_k)_{k\geq 0}$ between transience and stationarity? 

\begin{figure}[htbp]
\centering
\includegraphics[width=0.45\textwidth]{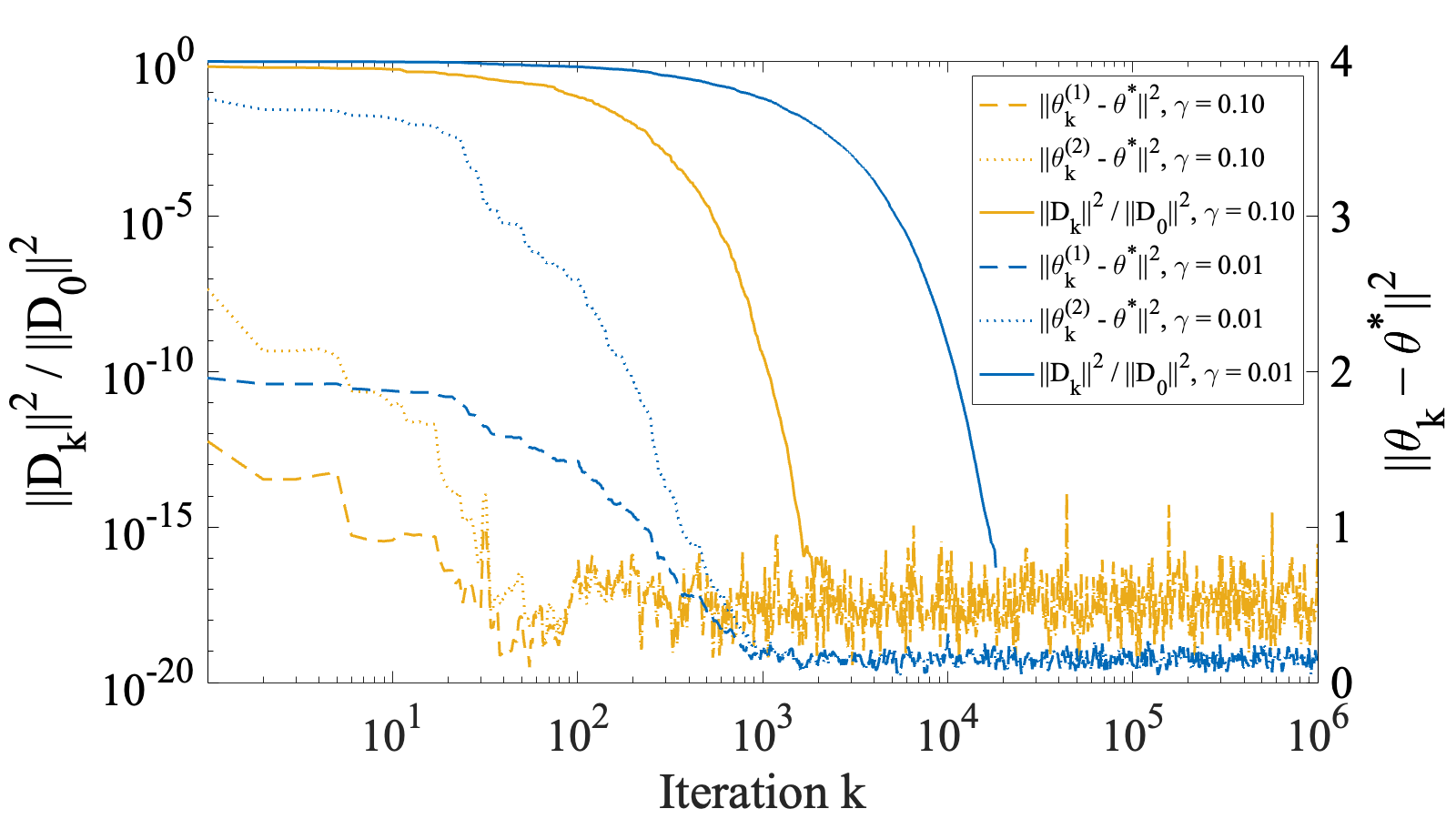}
\caption{Evolution of $\|\theta^{(1)}_k - \theta^\star\|^2$ and $\|\theta^{(2)}_k - \theta^\star\|^2$  and $\|\theta^{(1)}_k - \theta^{(2)}_k\|^2$ under least squares regression with two different constant stepsizes.}
\label{fig:Dk}
\end{figure}

\subsubsection{Quadratic Setting}

To answer the above question for our coupling approach, we first consider the case of a quadratic objective function, as stated in the following assumption. 

    \begin{assumption}[Quadratic semi-stochastic setting]\label{assumption: Quadratic}
    There exists a symmetric positive definite matrix $H$, a vector $a$ and a constant $c$ such that $f(\theta) = \frac{1}{2}\theta^\top H \theta + a^\top \theta + c$. The gradient noise $\varepsilon_i(\theta) = \xi_i$ is independent of $\theta$, where  $(\xi_i)_{i \geq 0}$ are i.i.d and satisfy $\mathbb{E}[\xi_i] = 0$ and $\mathbb{E}[\xi_i^\top \xi_i] = C$.
    \end{assumption}

Under this assumption, we can compute $\mathbb{E}[\|\theta^{(1)}_{k} - \theta^{(2)}_{k}\|^2]$ in a closed-form, as stated in the following proposition. The proof is provided in Appendix \ref{App: Proofs}

\begin{proposition}\label{proposition: Dk_quadratic}
 Suppose Assumptions \ref{assumption: L-smooth}, \ref{assumption: convex}, and \ref{assumption: Quadratic} hold. Let $\gamma \in \left(0, {1}/{L}\right)$. We have that for all $k \geq 0$:
\begin{align*}
    &\mathbb{E}[\|D_k\|^2] = \E[D_0^\top(I-\gamma H)^{2k}D_0]\leq (1-\gamma \mu)^{2k}\E[\|D_0\|^2].
\end{align*}
\end{proposition}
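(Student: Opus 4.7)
The plan is to exploit the simplification afforded by the quadratic semi-stochastic setting, namely that the gradient noise is state-independent ($\varepsilon_k(\theta)=\xi_k$) and that $\nabla f(\theta)=H\theta+a$ is affine. Because the two coupled chains share the same noise sequence $(\xi_k)$ and use the same stepsize $\gamma$, writing the recursion \eqref{eq: SGD} for each of $\theta_{k+1}^{(1)}$ and $\theta_{k+1}^{(2)}$ and subtracting should make both the constant $a$ and the noise $\xi_k$ cancel. This should leave a purely deterministic linear recursion for $D_k = \theta_k^{(1)}-\theta_k^{(2)}$ of the form
\begin{equation*}
D_{k+1} = (I-\gamma H)\,D_k.
\end{equation*}

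From there I would iterate to obtain $D_k = (I-\gamma H)^k D_0$, and take squared norm to get
\begin{equation*}
\|D_k\|^2 = D_0^\top (I-\gamma H)^{2k} D_0,
\end{equation*}
where I use that $I-\gamma H$ is symmetric since $H$ is. Taking expectations (only $D_0$ may be random here, and even if deterministic the identity is trivial) yields the equality claimed in the proposition.

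For the inequality, I would rely on the spectral picture of $H$. Assumptions~\ref{assumption: L-smooth} and \ref{assumption: convex} specialized to the quadratic setting force the eigenvalues of $H$ to lie in $[\mu, L]$. For any $\gamma\in(0,1/L)$, the eigenvalues of $I-\gamma H$ therefore lie in $[1-\gamma L,\,1-\gamma\mu]\subset[0,\,1-\gamma\mu]$, so $(I-\gamma H)^{2k}$ is positive semidefinite with operator norm bounded by $(1-\gamma\mu)^{2k}$. Consequently
\begin{equation*}
D_0^\top (I-\gamma H)^{2k} D_0 \;\le\; (1-\gamma\mu)^{2k}\|D_0\|^2,
\end{equation*}
and taking expectations completes the bound.

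There is no real obstacle in this argument; the whole proposition hinges on the noise-cancellation observation, which is a direct consequence of combining the quadratic/affine-gradient assumption with the common-noise coupling. The only item I would be slightly careful about is ensuring the spectral bound uses the inclusion $\gamma\in(0,1/L)$ so that $1-\gamma L\ge 0$ and hence all eigenvalues of $(I-\gamma H)^{k}$ are nonnegative, which makes the operator-norm step immediate rather than requiring an absolute value argument.
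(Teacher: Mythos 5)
Your proposal is correct and follows essentially the same route as the paper: cancel the shared noise and the affine term to get the deterministic recursion $D_{k+1}=(I-\gamma H)D_k$, iterate, and then bound $D_0^\top(I-\gamma H)^{2k}D_0$ via the spectrum of $I-\gamma H$. If anything, your spectral justification of the final inequality (eigenvalues of $H$ in $[\mu,L]$ from Assumptions~\ref{assumption: L-smooth}--\ref{assumption: convex}, hence $\|(I-\gamma H)^{2k}\|\le(1-\gamma\mu)^{2k}$ for $\gamma<1/L$) is slightly more explicit than the paper's one-line appeal.
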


Proposition \ref{proposition: Dk_quadratic} states that the expected squared distance decays exponentially and eventually converges to zero,

\begin{proposition}[Quadratic]\label{proposition: proximity-quadratic}
Under Assumptions \ref{assumption: L-smooth}, \ref{assumption: convex}, and \ref{assumption: Quadratic}, we have the following result with constant stepsize $\gamma \in (0,1/L):$
\begin{align}
&W_2^2\big(P^k_\gamma(\theta_0^{(1)},\cdot),\pi_\gamma \big)\nonumber \\
 \leq & \frac{\E[\|D_k\|^2]}{\E[\big(D_0^{\top}q_{\max}\big)^2]}\cdot\E_{\theta\sim\pi_{\gamma}}\big[\|\theta_0^{(1)}-\theta\|^2\big], 
\end{align}
where $q_{\max}$ is the eigenvector associated with the largest eigenvalue of the matrix $I-\gamma H.$
\end{proposition}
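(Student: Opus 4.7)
The plan is to combine a synchronous coupling argument with the explicit spectral structure of the deterministic contraction matrix $A := I - \gamma H$. First I initialize the auxiliary chain at stationarity, $\theta_0^{(2)} \sim \pi_\gamma$, so that $\theta_k^{(2)} \sim \pi_\gamma$ for every $k$ and the coupled pair $(\theta_k^{(1)}, \theta_k^{(2)})$ constitutes a valid coupling between $P_\gamma^k(\theta_0^{(1)}, \cdot)$ and $\pi_\gamma$. The variational definition of the Wasserstein distance then gives
\begin{equation*}
W_2^2\bigl(P_\gamma^k(\theta_0^{(1)}, \cdot), \pi_\gamma\bigr) \leq \E\bigl[\|D_k\|^2\bigr].
\end{equation*}

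Next, because Assumption~\ref{assumption: Quadratic} makes the noise independent of $\theta$ so that it cancels under the synchronous coupling, the difference evolves deterministically as $D_{k+1} = A D_k$, and $\E[\|D_k\|^2] = \E[D_0^\top A^{2k} D_0]$, as already recorded in Proposition~\ref{proposition: Dk_quadratic}. Since $\gamma \in (0,1/L)$ makes $A$ symmetric positive definite with spectrum contained in $[1-\gamma L,\,1-\gamma\mu]$, I read off $\|A\|_2 = 1-\gamma\mu$ and obtain the operator-norm upper bound $\E[\|D_k\|^2] \leq (1-\gamma\mu)^{2k}\,\E[\|D_0\|^2]$.

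The crucial step, and the one the stated ratio is designed to quantify, is the matching spectral lower bound. Expanding $D_0$ in the orthonormal eigenbasis $\{q_i\}$ of $H$, which is also the eigenbasis of $A$, yields
\begin{equation*}
\E\bigl[D_0^\top A^{2k} D_0\bigr] \;=\; \sum_i (1-\gamma\lambda_i)^{2k}\,\E\bigl[(D_0^\top q_i)^2\bigr];
\end{equation*}
keeping only the $q_{\max}$ summand, where $q_{\max}$ is the eigenvector associated with the smallest eigenvalue $\mu$ of $H$ (equivalently the largest eigenvalue $1-\gamma\mu$ of $A$), gives $(1-\gamma\mu)^{2k}\,\E[(D_0^\top q_{\max})^2] \leq \E[\|D_k\|^2]$. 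Chaining this with the previous upper bound and the coupling bound produces
\begin{equation*}
W_2^2 \;\leq\; (1-\gamma\mu)^{2k}\,\E[\|D_0\|^2] \;\leq\; \frac{\E[\|D_k\|^2]}{\E[(D_0^\top q_{\max})^2]}\cdot \E[\|D_0\|^2],
\end{equation*}
and the claim follows on noting that $\E[\|D_0\|^2] = \E_{\theta \sim \pi_\gamma}[\|\theta_0^{(1)} - \theta\|^2]$ by the choice of $\theta_0^{(2)}$. The main things to be careful about are the correct identification of $q_{\max}$ as the eigendirection of $H$ attaining the minimum $\mu$, and the positivity of $A$ needed to read its spectral norm as $1-\gamma\mu$; beyond that the proof is essentially a two-sided spectral sandwich around the scalar $(1-\gamma\mu)^{2k}$ with no further estimation required.
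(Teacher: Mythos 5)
Your ingredients (synchronous coupling, the deterministic recursion $D_{k+1}=(I-\gamma H)D_k$, and a two-sided spectral estimate) are the right ones, but there is a genuine gap: you prove a different statement from the one claimed. In the proposition, $D_k=\theta_k^{(1)}-\theta_k^{(2)}$ is the difference of the two \emph{algorithmic} coupled chains, whose initializations $\theta_0^{(1)},\theta_0^{(2)}$ are arbitrary (user-chosen); the ratio $\E[\|D_k\|^2]/\E[(D_0^{\top}q_{\max})^2]$ is meant to be the (expected) diagnostic the algorithm actually monitors. By instead \emph{choosing} $\theta_0^{(2)}\sim\pi_\gamma$ you change the meaning of $D_k$: your ratio now refers to an unimplementable coupling with a stationary-initialized chain, and with that choice the claimed inequality becomes essentially vacuous --- the synchronous coupling already gives $W_2^2\le\E[\|D_k\|^2]$, and since $\E[\|D_0\|^2]\ge\E[(D_0^{\top}q_{\max})^2]$ the right-hand side only inflates this, so the spectral sandwich is not even needed and nothing is established about the algorithm's statistic. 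The paper avoids this by using \emph{two} couplings linked through a common scalar rate: an auxiliary, purely analytical coupling with a $\pi_\gamma$-initialized chain yields $W_2^2\big(P^k_\gamma(\theta_0^{(1)},\cdot),\pi_\gamma\big)\le(1-\gamma\lambda_{\min})^{2k}\,\E_{\theta\sim\pi_{\gamma}}\big[\|\theta_0^{(1)}-\theta\|^2\big]$ (Claim \ref{claim:W2_quadratic}), while the spectral lower bound $\E[\|D_k\|^2]\ge(1-\gamma\lambda_{\min})^{2k}\E[(D_0^{\top}q_{\max})^2]$ is applied to the algorithmic chains with arbitrary $D_0$ (and requires $\E[(D_0^{\top}q_{\max})^2]\neq 0$, a non-degeneracy condition you should state since you divide by it). Your argument is repaired by exactly this decoupling; the lower-bound step for a generic $D_0$ is a piece you already have.

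A secondary slip: strong convexity only gives $\lambda_{\min}(H)\ge\mu$, so $\|I-\gamma H\|_2=1-\gamma\lambda_{\min}\le 1-\gamma\mu$ rather than equality, and your stated lower bound $(1-\gamma\mu)^{2k}\E[(D_0^{\top}q_{\max})^2]\le\E[\|D_k\|^2]$ is false whenever $\lambda_{\min}>\mu$ (take $D_0=q_{\max}$ deterministically). The sandwich must be carried out with the scalar $(1-\gamma\lambda_{\min})^{2k}$, with $q_{\max}$ the eigendirection of $\lambda_{\min}$, exactly as in the paper's proof; this costs nothing because the claimed inequality only involves that eigendirection.
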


\begin{proof}
By centering, we may assume without loss of generality that $a=0$ and $c=0$, so $f(\theta) = \frac{1}{2}\theta^\top H \theta$. For this setting, we have the following claim on the distributional convergence of the Markov chain $(\theta_k)_{k\geq 0}.$ The proof of the claim follows similar lines as that of Proposition 2 in \citet{dieuleveut2020bridging}. We provide the proof in Appendix \ref{App: Proofs} for completeness.

\begin{claim}\label{claim:W2_quadratic}
Under the setting of Proposition \ref{proposition: proximity-quadratic}, the Markov chain $(\theta_k)_{k\geq 0}$ given by the recursion \eqref{eq: SGD} satisfies:
\begin{equation}\label{eq: W2_quadratic}
W_2^2\big(P^k_\gamma(\theta_0,\cdot),\pi_\gamma \big) \leq (1-\gamma \lambda_{\min})^{2k} \E_{\theta\sim\pi_{\gamma}}\big[\norm{\theta_0-\theta}^2\big],
\end{equation}
where $\lambda_{\min}$ being the smallest eigenvalue of $H.$
\end{claim}

Note that the largest eigenvalues of the matrix $(I-\gamma H)$ is $1-\gamma \lambda_{\min}.$ Let $q_{\max}$ be the corresponding eigenvector. 
By Proposition \ref{proposition: Dk_quadratic}, we have:
\begin{align*}
\E[\|D_k\|^2] &= \E[D_0^\top(I-\gamma H)^{2k}D_0]\nonumber\\
&\geq (1-\gamma\lambda_{\min})^{2k}\E[\big(D_0^{\top}q_{\max}\big)^2].
\end{align*}
If the initial difference vector $D_0$ satisfies $\E[\big(D_0^{\top}q_{\max}\big)^2]\neq0$, 
combining the above inequality with the inequality \eqref{eq: W2_quadratic} from Claim \ref{claim:W2_quadratic} yields the desired result.  
\end{proof}

From Proposition \ref{proposition: proximity-quadratic}, we note that a small distance between the two iterates $\theta^{(1)}_{k},\theta^{(2)}_{k}$ implies that the iterates $\theta^{(1)}_{k}$ approximately converges to its stationary distribution $\pi_{\gamma}.$ Since the eigenvector $q_{\max}$ is unknown, we propose to track the ratio of the iterates difference $\norm{D_{k}}/\norm{D_{0}}$
as an approximation, which is easy to compute.  

\subsubsection{General Convex Setting}
We next generalize our analysis to non-quadratic setting.

\begin{theorem}[General Convex]
\label{thm: proximity-general}
Suppose Assumptions \ref{assumption: L-smooth}--\ref{assumption: bounded_variance} hold and $\gamma \in (0,\gamma_0],$ where \( \gamma_0 = \min \big\{ \frac{1}{4L}, \frac{2L}{\mu} \big\} \). Then $\forall \tau \in [\frac{4L}{\mu}, \infty)$, we have the following result:
\begin{align*}
&W_2^2\big(P^k_\gamma(\theta_0^{(1)},\cdot),\pi_\gamma \big) \\
\leq & \left( \frac{\E[\|D_k\|^2]}{\E[\|D_0\|^2]} \right)^{1/\tau} \cdot \E_{\theta\sim\pi_{\gamma}}\big[\|{\theta_0^{(1)}-\theta}\|^2\big].
\end{align*}
\end{theorem}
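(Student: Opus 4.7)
The plan is to combine the upper bound on the Wasserstein distance given by Proposition~\ref{prop:MC_convergence} with a \emph{lower} bound on $\E[\|D_k\|^2]$ derived from smoothness, then reconcile the two contraction rates through the exponent $1/\tau$.

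First, I would invoke Proposition~\ref{prop:MC_convergence} directly to obtain
\[W_2^2\bigl(P_\gamma^k(\theta_0^{(1)},\cdot),\pi_\gamma\bigr) \leq \rho^k\,\E_{\theta\sim\pi_\gamma}\bigl[\|\theta_0^{(1)}-\theta\|^2\bigr],\]
with $\rho = 1 - 2\gamma\mu(1-\gamma L/2)\in(0,1)$. The task thus reduces to showing $\rho^k \leq (\E[\|D_k\|^2]/\E[\|D_0\|^2])^{1/\tau}$.

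Second, I would derive the lower bound on $\E[\|D_k\|^2]$ by essentially running the standard contraction argument in reverse, with the roles of smoothness and strong convexity swapped. The synchronous-coupling recursion reads $D_{k+1} = D_k - \gamma[\nabla f_k(\theta_k^{(1)}) - \nabla f_k(\theta_k^{(2)})]$. Expanding $\|D_{k+1}\|^2$ and taking conditional expectation given $\mathcal{F}_{k-1}$, the cross-term's noise contribution vanishes by Assumption~\ref{assumption: unbiased} and the quadratic noise term is non-negative; dropping the latter and bounding the remaining cross-term via Cauchy--Schwarz together with Assumption~\ref{assumption: L-smooth} yields
\[\E\bigl[\|D_{k+1}\|^2 \mid \mathcal{F}_{k-1}\bigr] \geq (1-2\gamma L)\,\|D_k\|^2.\]
The constraint $\gamma\leq 1/(4L)$ ensures $1-2\gamma L\geq 1/2>0$, so iterating gives $\E[\|D_k\|^2] \geq (1-2\gamma L)^k \E[\|D_0\|^2]$.

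Third, it remains to verify the deterministic inequality $\rho^\tau \leq 1-2\gamma L$ whenever $\tau\geq 4L/\mu$ and $\gamma\leq\gamma_0$. Using $\rho \leq e^{-2\gamma\mu(1-\gamma L/2)}$ and $1-\gamma L/2 \geq 7/8$ under $\gamma\leq 1/(4L)$, at $\tau = 4L/\mu$ this gives $\rho^\tau \leq e^{-7\gamma L}$, and a short calculus check confirms $e^{-7x/2} \leq 1-x$ for $x = 2\gamma L \in (0,1/2]$ (the difference vanishes at $x=0$ with positive derivative and remains positive throughout). Combining with the lower bound yields $\rho^k \leq (1-2\gamma L)^{k/\tau} \leq (\E[\|D_k\|^2]/\E[\|D_0\|^2])^{1/\tau}$; substitution into the Wasserstein bound completes the proof, and monotonicity of $x^{1/\tau}$ in $\tau$ for $x\in(0,1]$ extends it to all $\tau\geq 4L/\mu$. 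The main obstacle I expect is not the upper or lower bound individually---both are routine synchronous-coupling computations---but rather the numerical reconciliation between them: the lower rate $1-2\gamma L$ is roughly $L/\mu$ times weaker than $\rho\approx 1-2\gamma\mu$, and absorbing that gap into $1/\tau$ is exactly what forces the $\tau\geq 4L/\mu$ threshold, which hinges on a careful but elementary exponential comparison.
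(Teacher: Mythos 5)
Your proposal is correct and follows essentially the same route as the paper's proof: a synchronous-coupling expansion giving the reverse contraction $\E\|D_k\|^2 \ge (1-2\gamma L)^k\,\E\|D_0\|^2$ (the paper retains the $\gamma^2$ term and lower-bounds it by $\gamma^2\mu^2\|D_k\|^2$, which is immaterial), an elementary exponential comparison showing $\tau \ge 4L/\mu$ suffices (your $e^{-7x/2}\le 1-x$ check plays the role of the paper's Lemma \ref{lemma: 1}, which routes through $(1-\gamma\mu)$ instead of $\rho$), and Proposition \ref{prop:MC_convergence} for the Wasserstein bound. One small tidy-up: rather than extending to general $\tau$ via monotonicity of $x^{1/\tau}$ in $\tau$ (which tacitly requires $\E\|D_k\|^2 \le \E\|D_0\|^2$, true here but unverified in your write-up), it is cleaner to note that $\rho^\tau \le \rho^{4L/\mu} \le 1-2\gamma L$ for all $\tau \ge 4L/\mu$ since $\rho\in(0,1)$, after which raising $\E\|D_k\|^2/\E\|D_0\|^2 \ge (1-2\gamma L)^k$ to the power $1/\tau$ needs no such condition.
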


\begin{proof}
    Consider the SGD update rule in eq. (\ref{eq: 1}) and fix an arbitrary integer $k\ge 0.$
    The expected squared norm of the parameter difference after one iteration is given by:
    \begin{align}\label{eq:5}
        &\mathbb{E}\|\theta_{k+1}^{(1)} - \theta_{k+1}^{(2)}\|^2 \\
        &= \mathbb{E} \| (\theta_{k}^{(1)} - \gamma \nabla f_{k}(\theta_{k}^{(1)})) - (\theta_{k}^{(2)} - \gamma \nabla f_{k}(\theta_{k}^{(2)}))\|^2 \nonumber \\
        &= \mathbb{E}\|\theta_{k}^{(1)} - \theta_{k}^{(2)}\|^2 + \gamma^2 \mathbb{E}\|\nabla f_{k}(\theta_{k}^{(1)}) - \nabla f_{k}(\theta_{k}^{(2)})\|^2\nonumber \\
        &\quad - 2\gamma \mathbb{E} \langle \nabla f_{k}(\theta_{k}^{(1)}) - \nabla f_{k}(\theta_{k}^{(2)}), \theta_{k}^{(1)} - \theta_{k}^{(2)} \rangle. \nonumber 
    \end{align}
By the smoothness assumption of \(f\), we have:
    \begin{align*}
&\langle \nabla f(\theta_{k}^{(1)}) - \nabla f(\theta_{k}^{(2)}), \theta_{k}^{(1)} - \theta_{k}^{(2)} \rangle \leq L \|\theta_{k}^{(1)} - \theta_{k}^{(2)}\|^2.
    \end{align*}
Therefore, we can lower bound the LHS of Equation (\ref{eq:5}) as
    \begin{align*}
    \mathbb{E}\|\theta_{k+1}^{(1)} - \theta_{k+1}^{(2)}\|^2 &\geq \mathbb{E}\|\theta_{k}^{(1)} - \theta_{k}^{(2)}\|^2 - 2\gamma L\mathbb{E}\|\theta_{k}^{(1)} - \theta_{k}^{(2)}\|^2 \\
    &\quad + \gamma^2 \mathbb{E}\|\nabla f_1(\theta_{k}^{(1)}) - \nabla f(\theta_{k}^{(2)})\|^2.
    \end{align*}
    Moreover, by strong convexity of $f$ together with  the Cauchy-Schwarz inequality, we obtain
    \[
    \|\nabla f(\theta_{k}^{(1)}) - \nabla f(\theta_{k}^{(2)})\| \geq \mu \|\theta_{k}^{(1)} - \theta_{k}^{(2)}\|.
    \]
    It follows that
    \begin{equation}
       \mathbb{E}\|\theta_{k+1}^{(1)} - \theta_{k+1}^{(2)}\|^2 \geq (1 - 2\gamma L + \gamma^2 \mu^2) \mathbb{E}\|\theta_{k}^{(1)} - \theta_{k}^{(2)}\|^2 \nonumber.
    \end{equation}
    Therefore, by induction on $k$, we obtain the lower bound:
    \begin{equation}
        \mathbb{E}\|\theta_k^{(1)} - \theta_k^{(2)}\|^2 \geq \varrho^k \mathbb{E}\|\theta_0^{(1)} - \theta_0^{(2)}\|^2,
    \end{equation}
    where $\varrho := 1 - 2\gamma L + \gamma^2 \mu^2.$ 
    Applying Lemma \ref{lemma: 1} from Appendix \ref{sec:tech_lemma} yields:
    \begin{equation*}
        {\mathbb{E}\|D_k\|^2}/{\E\|D_0\|^2} \geq \varrho^{k} \geq (1-\gamma\mu)^{k\cdot k_0} \geq (1-\gamma\mu)^{k\cdot \tau},
    \end{equation*}
    where the last inequality follows from the fact that $\tau \geq k_0:=4L/\mu.$  Combining with Proposition~\ref{prop:MC_convergence}, we obtain
\begin{align*}
&\quad W_2^2\big(P^k_\gamma(\theta_0,\cdot),\pi_\gamma \big) \leq \rho^k \E_{\theta\sim\pi_{\gamma}}\big[\norm{\theta_0-\theta}^2\big] \\
& \leq (1-\gamma \mu)^{k} \E_{\theta\sim\pi_{\gamma}}\big[\norm{\theta_0-\theta}^2\big] \\
&\leq \left( {\E\|D_k\|^2}/{\E\|D_0\|^2} \right)^{1/\tau} \cdot \E_{\theta\sim\pi_{\gamma}}\big[\norm{\theta_0-\theta}^2\big].
\end{align*}
\end{proof}

Built on Proposition \ref{proposition: proximity-quadratic} and Theorem~\ref{thm: proximity-general}, we conclude that the ratio of distances, ${\|D_k\|^2}/{\|D_0\|^2}$ naturally serves as an efficient statistic that detects the transition of the Markov chain from transience to stationarity. In particular, when the statistic falls below a predefined threshold, it indicates that $\theta_k^{(1)}$ approximately saturates. Then the algorithm triggers a stepsize decay and repeats. 

We formally describe our method in Algorithm~\ref{alg:static}, named \textit{Coupling-based SGD}.
Our diagnostic algorithm aims to detect the saturation of constant stepsize SGD $(\theta_k^{(1)})_{k\geq0}$ by tracking its distance to an auxiliary SGD sequence $(\theta_k^{(2)})_{k\geq0}$. Given two initial points, two SGD iterates progress simultaneously using the same stepsize $\gamma$ and mini-batch of data. At each iteration $k$, we calculate the ratio of distances, ${\|\theta^{(1)}_{k} - \theta^{(2)}_{k}\|^2}/{\|\theta^{(1)}_{0} - \theta^{(2)}_{0}\|^2}$. If the ratio falls below a certain threshold $\beta,$ the stepsize is decreased by a factor $r$, and the auxiliary iterate $\theta_k^{(2)}$ is \emph{re-initialized.} Setting an arbitrary re-initialization might take the auxiliary iterate far away from the vicinity of the optimum solution $\theta^{\star}.$ Consequently, it would take the auxiliary iterates $\theta_k^{(2)}$ long time to re-entering the vicinity and sync with the primary iterates $\theta_k^{(1)}.$ To address this issue, we propose to 
set the new initial point using the $b$-step backward iterate $\theta_{k-b}^{(2)}$, which stays close to the stationary neighborhood with an appropriate $b$. Our experiment in Section \ref{sec: Robustness Results} demonstrates the robustness of our approach to the parameter $b$. 

\begin{algorithm}[tb]
\caption{\, Coupling-based SGD}
\label{alg:static}
\textbf{Input}: $\theta^{(1)}_0$, $\theta^{(2)}_0$\\
\textbf{Parameters}: initial stepsize $\gamma$, stepsize decay factor $r$, backward  steps~$b$, threshold $\beta$
\begin{algorithmic}[1]
\WHILE{$k \in \{1, 2, 3, \dots, n\}$}
\STATE $\theta^{(1)}_k \gets \theta^{(1)}_{k-1} - \gamma \nabla f_{k-1}(\theta^{(1)}_{k-1})$
\STATE $\theta^{(2)}_k \gets \theta^{(2)}_{k-1} - \gamma \nabla f_{k-1}(\theta^{(2)}_{k-1})$
    \STATE $S \gets {\|\theta^{(1)}_{k} - \theta^{(2)}_{k}\|^2}/{\|\theta^{(1)}_{0} - \theta^{(2)}_{0}\|^2}$
    \COMMENT{distance between two SGD sequences $\thickapprox\rho^k$}
        \IF{$S < \beta$}
        \STATE $\gamma \gets r\cdot\gamma$
        \COMMENT{reduce the stepsize}
        \STATE $\theta^{(2)}_{k} \gets \theta^{(2)}_{k-b}$
        \COMMENT{update the new initial point}
        \ENDIF
\ENDWHILE
\end{algorithmic}
\end{algorithm}

\paragraph{Adaptive Coupling-based SGD}
As illustrated in Figure \ref{fig:Dk}, as the stepsize $\gamma$ gets smaller, the iterates in stationarity incur smaller approximation error and variance. Indeed, the stationary distribution $\pi_\gamma$ is shown to satisfy $\E_{\pi_\gamma}[\theta]-\theta^\star=A\gamma+O(\gamma^2)$ and $\E_{\pi_\gamma}[\|\theta-\theta^\star\|^2]=A'\gamma+O(\gamma^2)$ \cite[Theorem 4]{dieuleveut2020bridging}. With a smaller $\gamma,$ the stationary distribution $\pi_{\gamma}$ lives in a smaller vicinity of $\theta^{\star}.$ Consequently, to accurately detect distributional convergence of the iterates, it makes sense to employ a more stringent criterion with smaller stepsize $\gamma$. To this end, we propose using an adaptive threshold $\beta$ for the diagnostic statistic, as presented in Algorithm \ref{alg:adaptive}. In particular, as $\gamma$ decreases, we also reduce the threshold $\beta$ by a factor $\eta\in (0,1).$ 

\begin{algorithm}[tb]
\caption{\, Adaptive Coupling-based SGD}
\label{alg:adaptive}
\textbf{Input}: $\theta^{(1)}_0$, $\theta^{(2)}_0$\\
\textbf{Parameter}: initial stepsize $\gamma$, stepsize decay factor $r$,  backward steps $b$, initial threshold $\beta$, threshold decay factor $\eta.$
\begin{algorithmic}[1]
\WHILE{$k \in \{1, 2, 3, \dots, n\}$}
\STATE $\theta^{(1)}_k \gets \theta^{(1)}_{k-1} - \gamma \nabla f_{k-1}(\theta^{(1)}_{k-1})$
\STATE $\theta^{(2)}_k \gets \theta^{(2)}_{k-1} - \gamma \nabla f_{k-1}(\theta^{(2)}_{k-1})$
    \STATE $S \gets {\|\theta^{(1)}_{k} - \theta^{(2)}_{k}\|^2}/{\|\theta^{(1)}_{0} - \theta^{(2)}_{0}\|^2}$
    \COMMENT{distance between two SGD sequences $\thickapprox\rho^k$}
        \IF{$S < \beta$}
        \STATE $\gamma \gets r \cdot \gamma$
        \COMMENT{reduce the stepsize}
        \STATE $\beta \gets \eta\cdot\beta$
        \COMMENT{reduce the threshold}
        \STATE $\theta^{(2)}_{k} \gets \theta^{(2)}_{k-b}$
        \COMMENT{update the new initial point}
        \ENDIF
\ENDWHILE
\end{algorithmic}
\end{algorithm}

\section{Empirical Study}\label{sec: Experiments}
In this section, we present an empirical study of our proposed diagnostic method on various tasks. 

\subsection{Setup} 

\noindent\textbf{Algorithms.} Our main baselines include the Pflug's diagnostic-based approach $\text{ISGD}^{1/2}$ \cite{chee2018convergence}, and the distance-based algorithm \cite{pesme2020distance}. For these baseline methods, we carefully tune their parameters to achieve their best performance. 

{To demonstrate the effectiveness and robustness of our proposed diagnostic method, we conduct extensive experiments on a diverse set of problems, including (1) Logistic regression, (2) Least squares regression, (3) The 18-layer ResNet model \cite{he2016deep}, (4) Linear stochastic approximation with Markovian data, (5) SVM, (6) Uniformly convex functions, (7) Lasso.} Due to space constraint, here we focus on the experiment setup and results of the first three tasks and defer details of other settings to Appendix \ref{App: Additional_Experiments}\footnote{The code is available at \url{https://github.com/XianggLi/coupling-based-sgd}}. 

\begin{figure*}[htbp]
    \centering
    \subfigure{
        \includegraphics[width=0.41\textwidth]{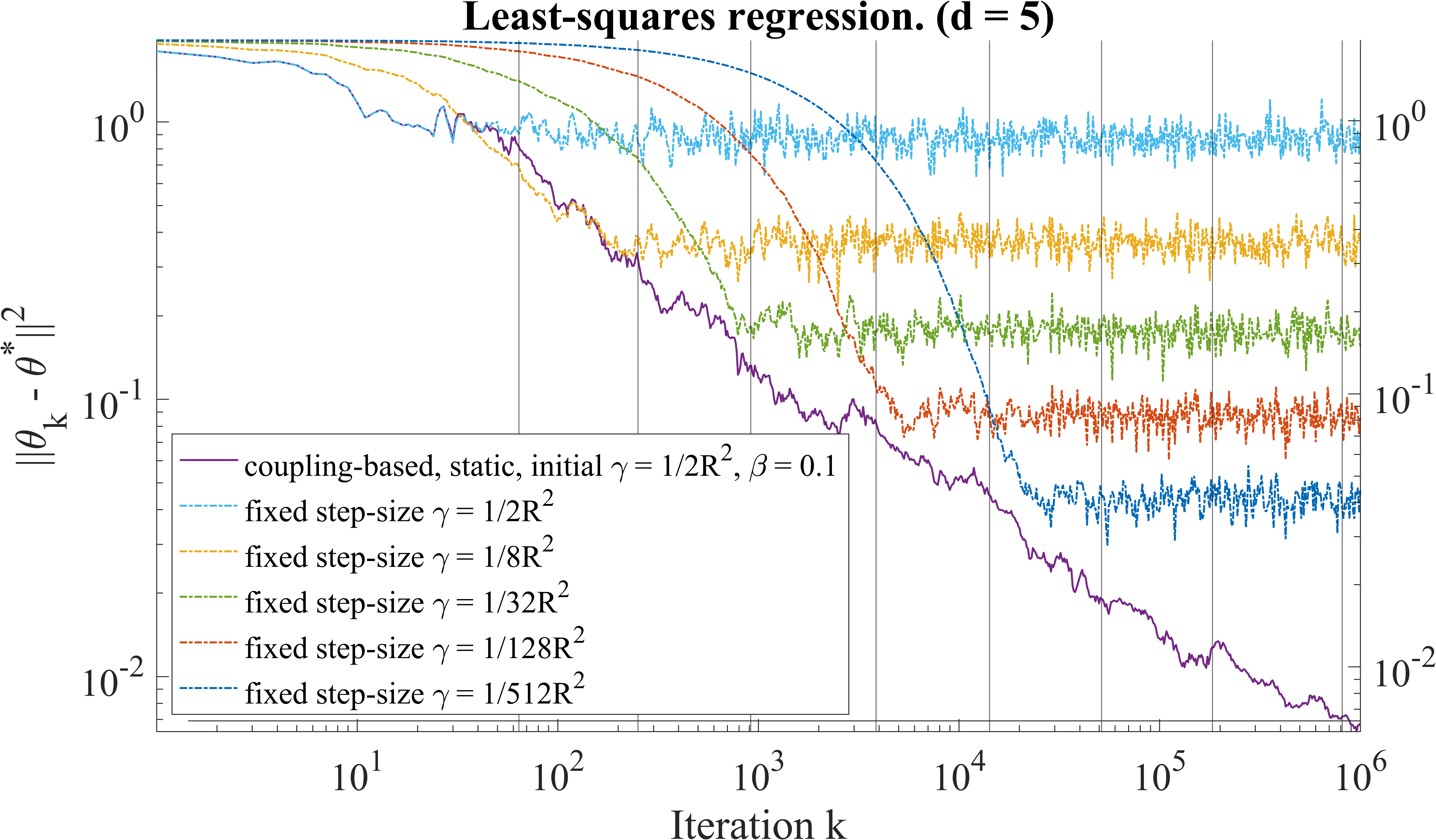}
    }
    \subfigure{
        \includegraphics[width=0.41\textwidth]{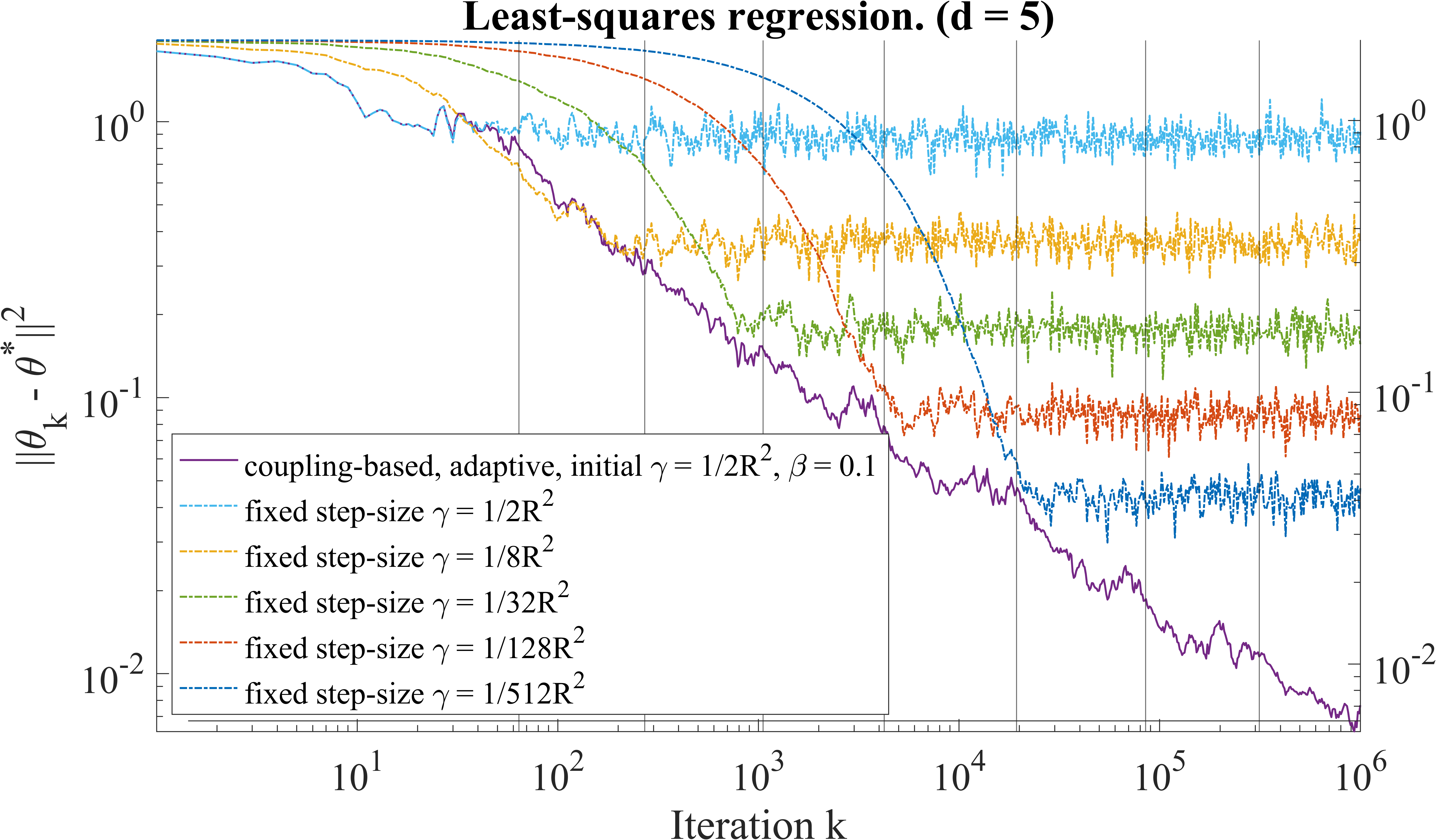}
    }
    \caption{Effectiveness of our coupling-based statistic  for stationarity diagnostic. Left: Algorithm \ref{alg:static} with static threshold; Right: Algorithm \ref{alg:adaptive} with adaptive threshold. The vertical lines correspond to restarts of our coupling-based algorithms.}
    \label{fig:constant_step}
\end{figure*}

\begin{figure*}[htb]
    \centering
    \subfigure{
        \includegraphics[width=0.41\textwidth]{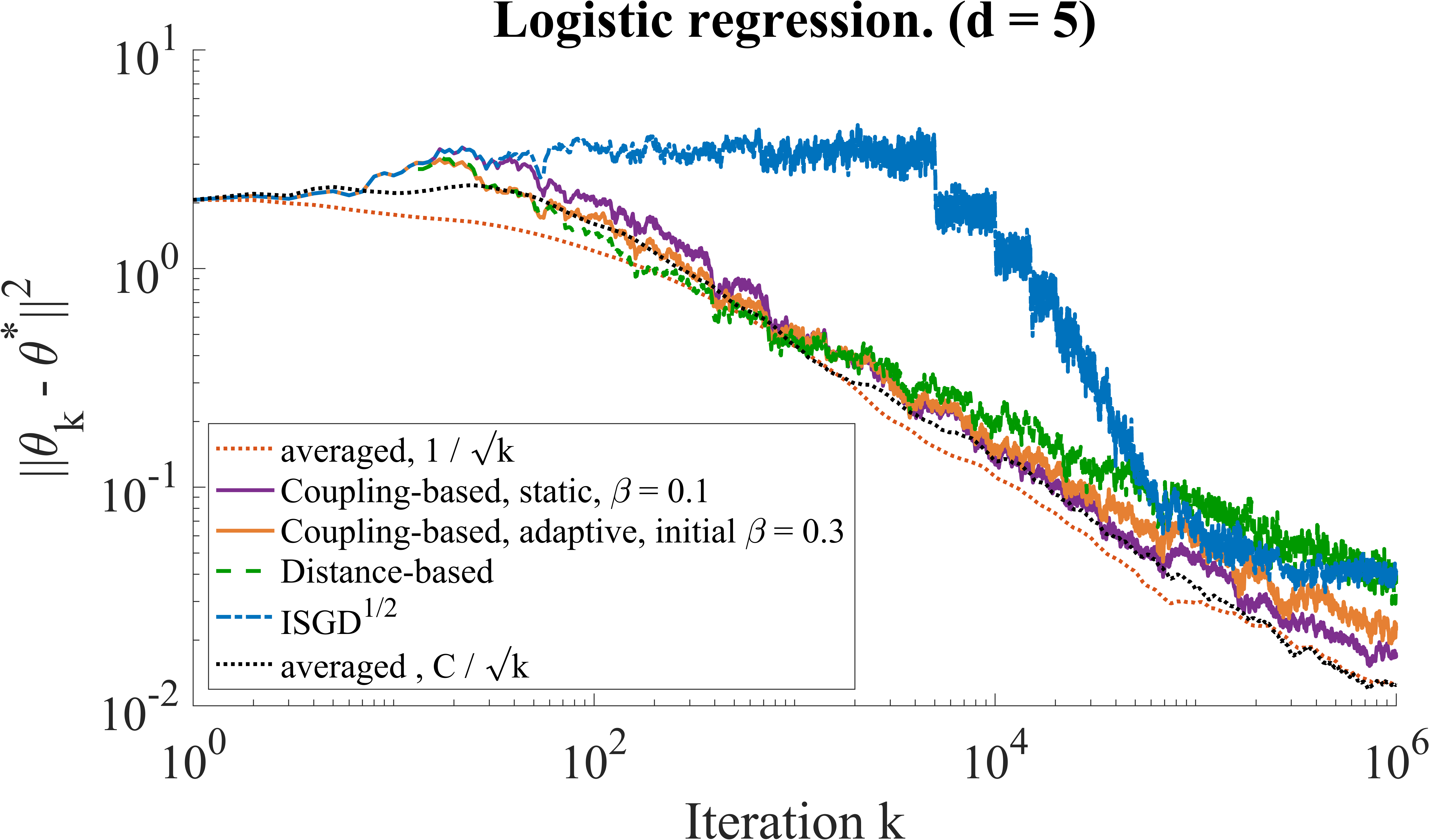}
    }
    \subfigure{
        \includegraphics[width=0.41\textwidth]{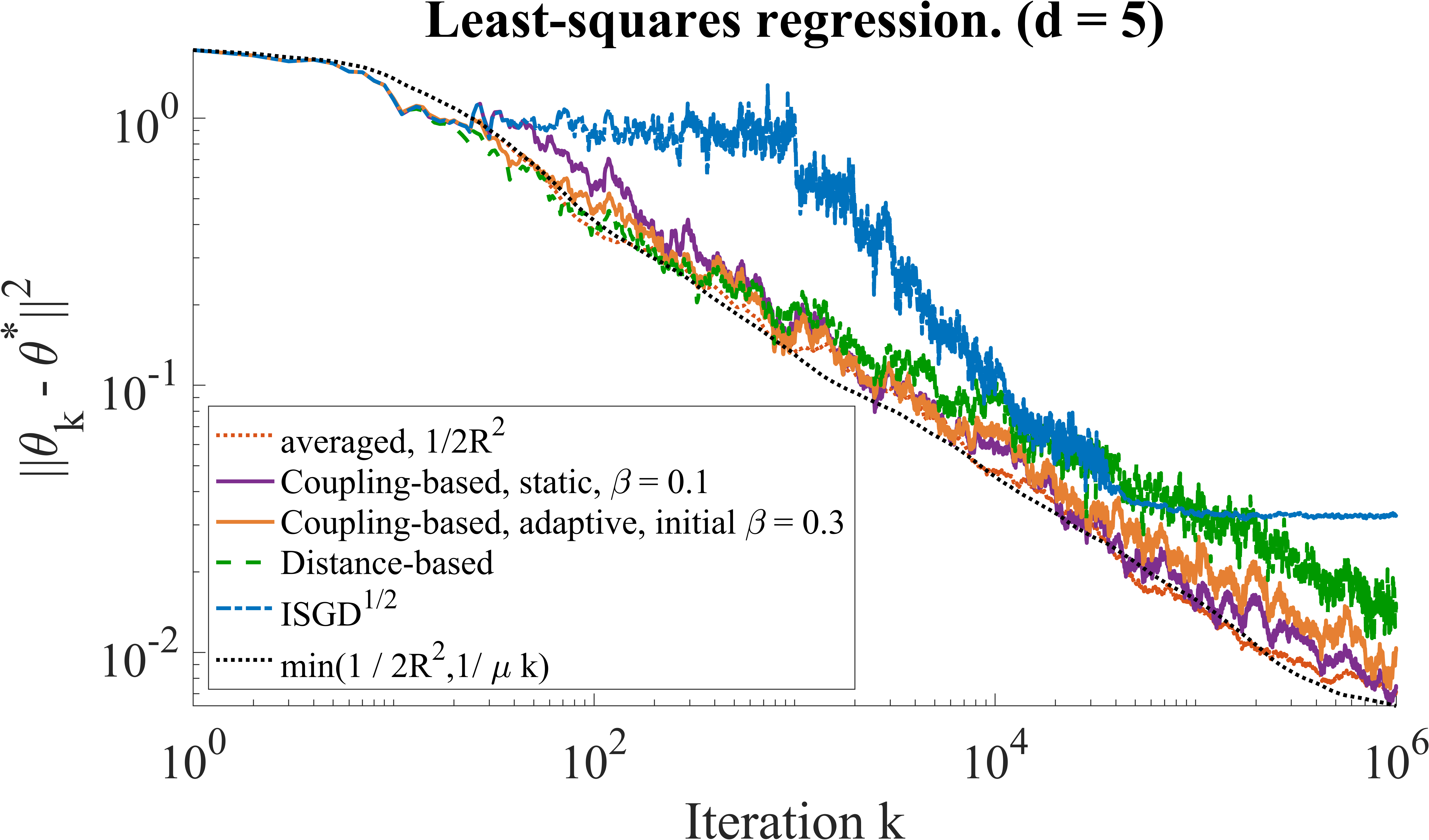}
    }

    \caption{Logistic regression (left) and Least squares regression (right).
    The initial stepsize of coupling/distance-based and $\text{ISGD}^{1/2}$ is $\gamma_0 = 4/R^2$ for logistic regression, and $\gamma_0 = 1/2R^2$ for least squares. The errors are averaged over $10$ replications.}
    \label{fig: Logistic_least_compare}
\end{figure*}

\begin{figure*}[htb]
    \centering
        \subfigure[Logistic regression: different stepsize decay factor $r$]{
        \includegraphics[width=0.41\textwidth]{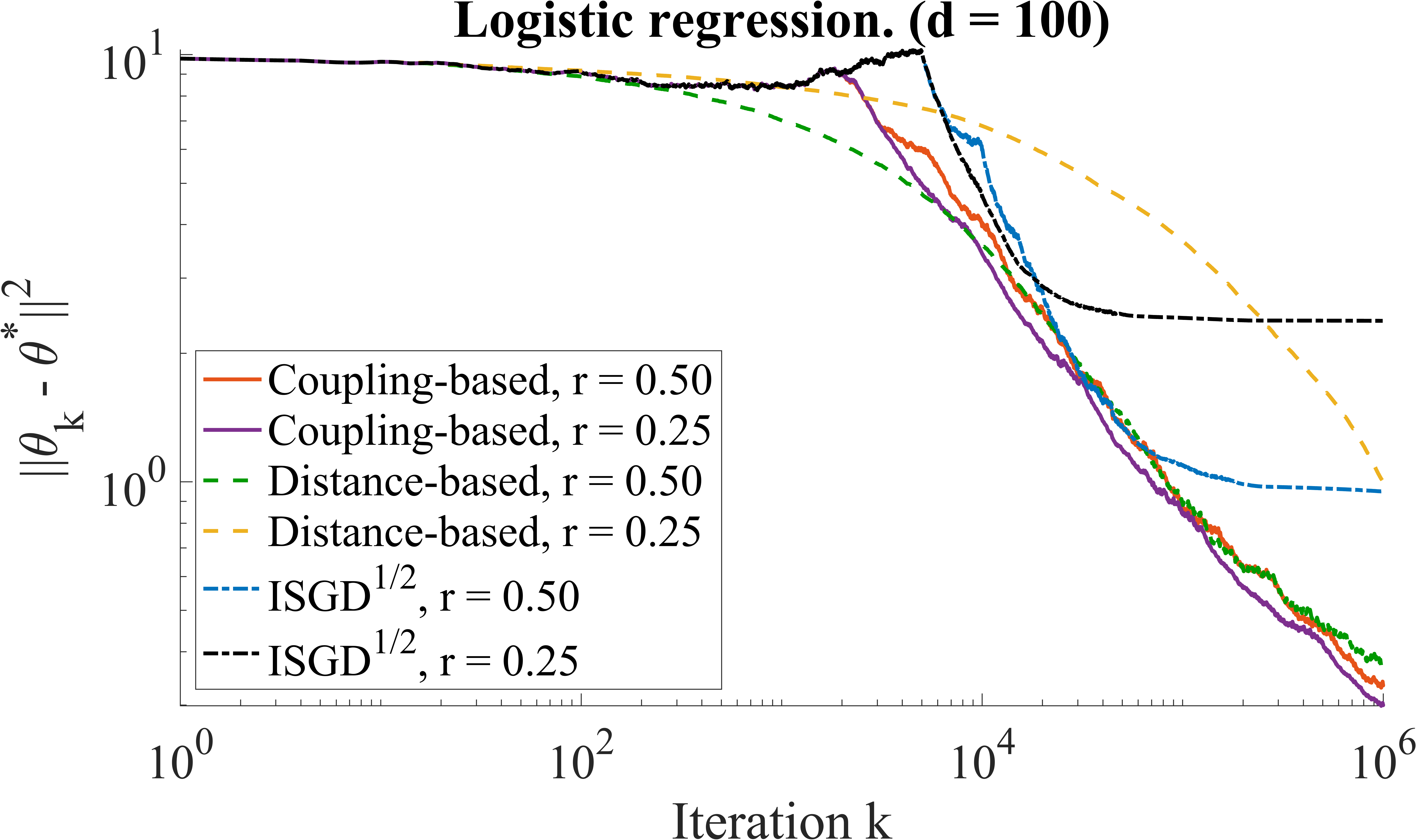} \label{fig: Robustness_LR_r}
    }
    \subfigure[Logistic regression: different initial threshold $\beta$]{
        \includegraphics[width=0.41\textwidth]{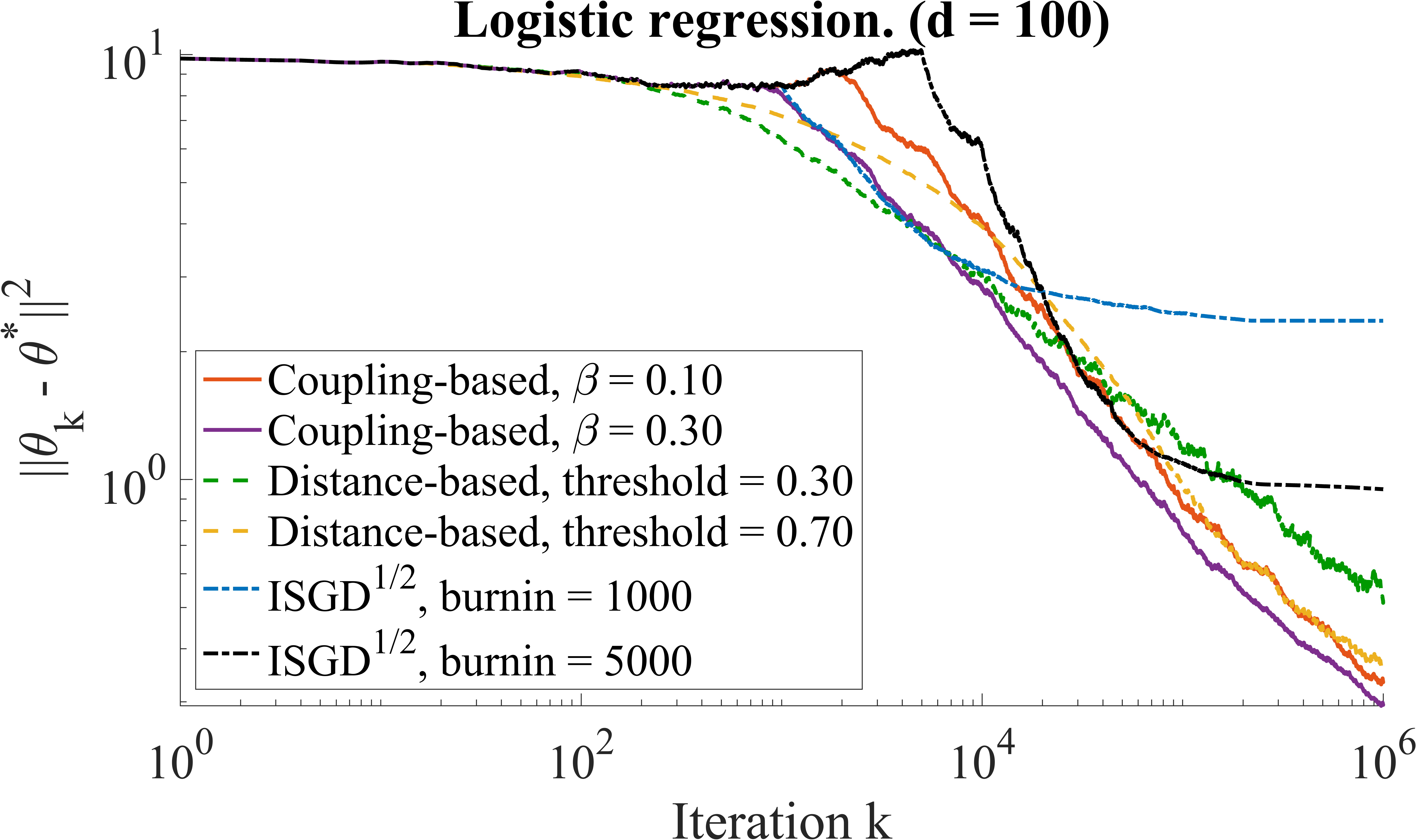} \label{fig: Robstuness_LR_thresh}
    }
    \subfigure[LSR: different stepsize decay factor $r$]{
        \includegraphics[width=0.41\textwidth]{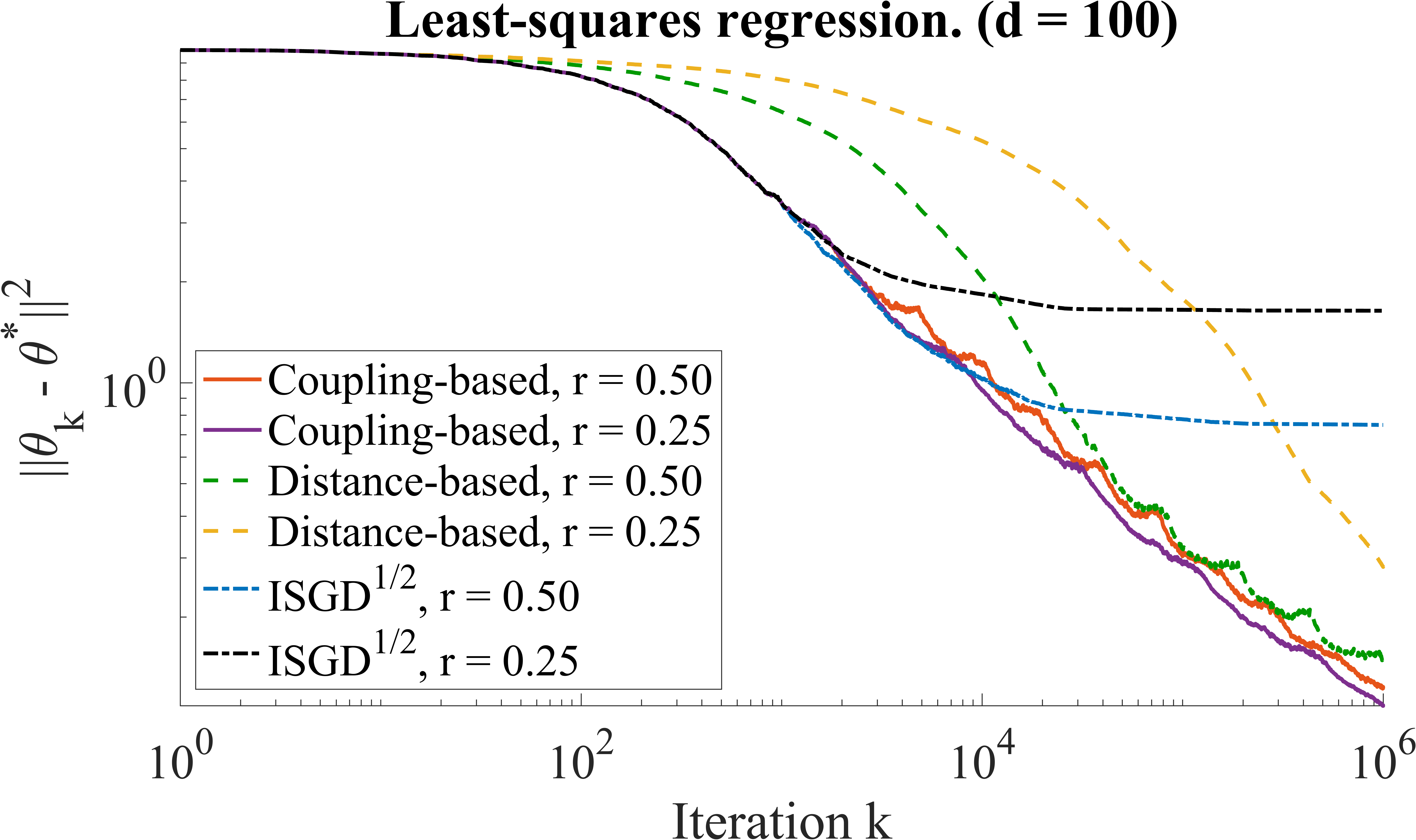}\label{fig: Robustness_LSR_r}
    }
    \subfigure[LSR: different initial threshold $\beta$]{
        \includegraphics[width=0.41\textwidth]{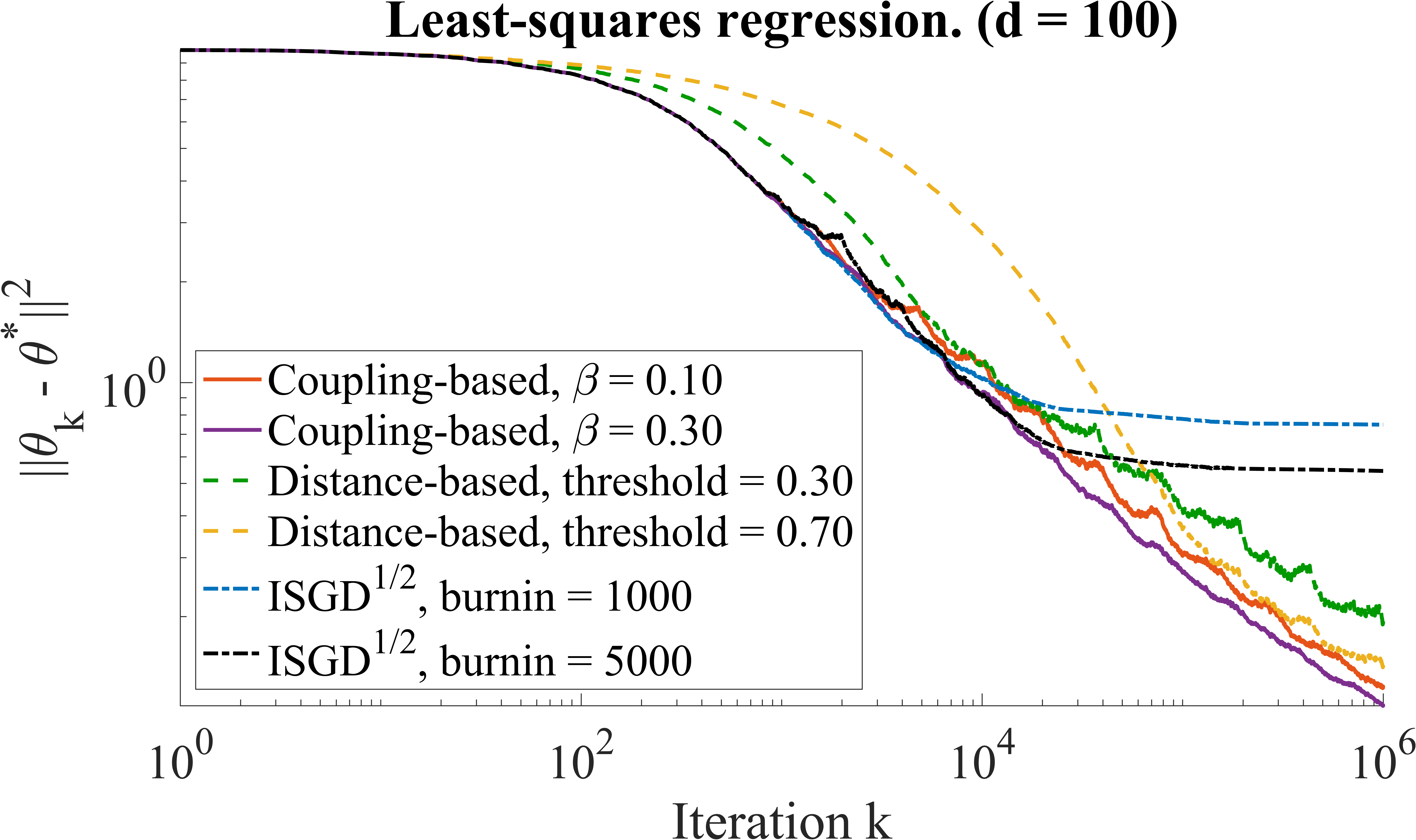} \label{fig: Robustness_LSR_thresh}
    }
    \caption{Robustness results under logistic regression and least squares regression (LSR) with $d = 100$. }
    \label{fig: Robustness}
\end{figure*}

\noindent\textbf{Logistic Regression.}
The objective function $f$ is defined as $f(\theta)=\mathbb{E}[\log(1+e^{-y_i \langle x_i, \theta \rangle})]$. The inputs $x_i$ are i.i.d. drawn from a multivariate normal distribution $\mathcal{N}(0, H)$, where $H$ is a diagonal matrix with dimension $d$.
The outputs $y_i\in\{-1, 1\}$ are generated according to the logistic probabilistic model.
We include averaged-SGD with stepsizes $\gamma_k = {1}/{\sqrt{k}}$ as a baseline, which achieves the optimal rate of $\mathcal{O}\left({1}/{n}\right)$ \cite{bach2014adaptivity}. In addition, we evaluate averaged-SGD with stepsizes $\gamma_k = C/\sqrt{k}$, tuning the parameter $C$ to achieve the best performance.

\noindent\textbf{Least Squares Regression.}
The objective function $f$ is given by $f(\theta) = \frac{1}{2}\mathbb{E}[(y_i - \langle x_i, \theta \rangle)^2]$. 
The inputs $x_i$ are the same as in the logistic regression model. 
The outputs $y_i$ are generated according to: $y_i = \langle x_i, \theta^\star \rangle + \varepsilon_i,$ where $\varepsilon_i$ are i.i.d. noise following a normal distribution $\mathcal{N}(0, \sigma^2)$. 
We include averaged-SGD with a constant stepsize $\gamma = 1/2R^2$, where $R^2 = \text{Tr} H$, due to its optimal convergence rate $\mathcal{O}\left({\sigma^2 d}/{n}\right)$ \cite{bach2013non}. We also evaluate SGD with stepsize $\gamma_k=1/{\mu k},$ which achieves a rate of $1/{\mu k}.$

\noindent\textbf{ResNet on CIFAR-10.} 
We consider an 18-layer ResNet model \cite{he2016deep} and train it on the CIFAR-10 dataset \cite{krizhevsky2009learning}. The training is conducted using an initial stepsize of 0.01, and a batch size of 128. To employ our coupling-based algorithm, we use PyTorch’s CustomScheduler() scheduler, with a stepsize decay factor of 0.1, a patience parameter of 10, and a threshold of 0.95. Specifically, we initialize two different ResNet models, and monitor the distance between their parameters. When the distance falls below the threshold for a specific number of epochs, we reduce the stepsize by the decay factor.

\subsection{Main Results}\label{sec: Comparison Results}

Our first set of experiments aim to demonstrate the efficiency of our coupling-based diagnostic for detecting stationarity of the iterates. 
We compare our algorithms against SGD with different constant stepsizes $\gamma$ for the least squares regression setting. The results are presented in Figure \ref{fig:constant_step}. Note that a larger constant stepsize leads to a faster convergence, but inducing a larger error when reaching saturation. We observe that, for both the static (Algorithm \ref{alg:static}) and adaptive coupling-based methods (Algorithm \ref{alg:adaptive}), the restarts (solid vertical line) mostly happen when the constant stepsize iterates start reaching saturation. That is, our coupling-based diagnostic statistic almost perfectly detects the stationarity of the iterates, without early on restart nor delay. Compared with the static variant (Figure \ref{fig:constant_step} left), the adaptive version (Figure \ref{fig:constant_step} right) becomes more conservative with the decreasing of threshold $\beta,$ leading to more accurate detection of stationarity; see, for example, the 5th restart. 

We next compare our method to other algorithms under logistic regression and least squares regression. For both settings, we consider synthetic dataset with size $n=1e6$ and dimension $d = (5, 20, 50, 100)$. The results of $d=5$ are presented in Figure \ref{fig: Logistic_least_compare}, and results of other dimensions are provided in Appendix \ref{App: Additional_Experiments}. We observe that both static and adaptive variant of our algorithms consistently achieve superior performance across all settings, with more prominent advantages in higher-dimensional cases. 
In particular, for logistic regression, our methods achieve comparable performance as the averaged-SGD with the best tuned diminishing stepsizes $\gamma_k=C/\sqrt{k}.$ Under the least squares regression, the performance of our methods is close to that with stepsizes $1/{\mu k}$, but without knowing the problem parameter $\mu.$ We note that the Pflug's diagnostic-based method $\text{ISGD}^{1/2}$ (blue lines) prematurely detects convergence and restarts too often. Consequently, it results in small stepsizes quickly and thus stopping further progress towards the optimal point early on.

\noindent\textbf{ResNet.} To demonstrate the general applicability of our coupling-based algorithm, we applied it to deep learning tasks. As shown in Figure \ref{fig:ResNet18}, our method (blue line) achieved a $94\%$ test accuracy, comparable to state-of-the-art results for ResNet18 on this benchmark. Although the test accuracy with constant stepsize (red line) appears to plateau around epoch 75, manually adjusting the learning rate at this point (green line) resulted in trivial improvements. This suggests that the adjustment was likely made too early. In contrast, our algorithm effectively identifies the optimal adjustment point at epoch 145, ensuring meaningful improvements.

\begin{figure}[htbp]
    \centering
\includegraphics[width=0.45\textwidth]{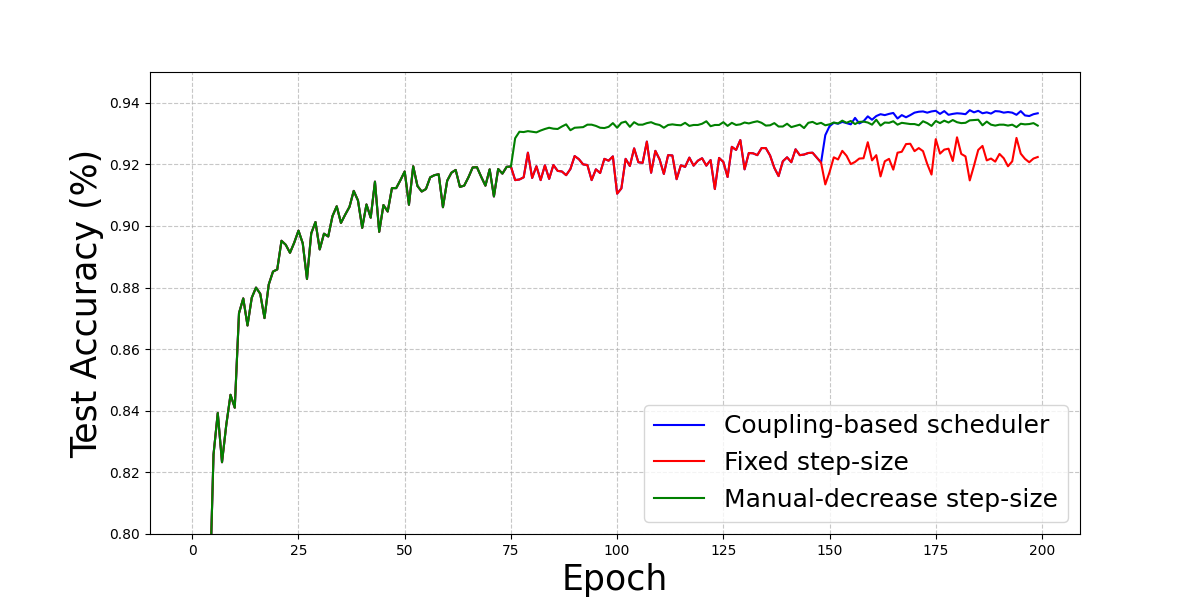}
    \caption{ResNet-18 Test Accuracy on CIFAR-10 dataset. The initial stepsizes are 0.01.}
    \label{fig:ResNet18}
\end{figure}

\subsection{Robustness Study}\label{sec: Robustness Results}

Our last set of experiments aim to study the sensitivity of our methods to algorithm parameters, including the stepsize decay factor $r$, the threshold $\beta$ and backward steps $b$ for re-initialization of auxiliary sequence. Here we focus on our static coupling-based algorithm for logistic regression and least squares regression with dimension $d = 100$. Additional robustness results for adaptive variant and other settings are provided in Appendix \ref{App: Additional_Experiments}.

The results are presented in Figure \ref{fig: Robustness}, where each row corresponds to different data models, and each column corresponds to variations of different hyper-parameters. Specifically, 
we first perform the experiments with different stepsize decay factor $r$, while keeping other parameters fixed. As illustrated in Figure \ref{fig: Robustness_LR_r} and \ref{fig: Robustness_LSR_r}, our algorithm is insensitive to the stepsize decay factor $r$, and consistently achieves the best performance for both logistic regression and least squares regression. In contrast, the performance of the distance-based algorithm and the $\text{ISGD}^{1/2}$ algorithm varies significantly with $r$.  
We also test the sensitivity of our method and the distance-based algorithm with respect to the threshold parameter for convergence diagnostic, as well as the $\text{ISGD}^{1/2}$ algorithm w.r.t.\ the burn-in parameter. Figure \ref{fig: Robstuness_LR_thresh} and \ref{fig: Robustness_LSR_thresh} show that our coupling-based method achieves similar superior performance with different $\beta$ across various settings.

\section{Conclusion and Future Work} \label{sec:conclusion}
In this paper, we study the convergence diagnostic of stochastic gradient descent with constant stepsize. We propose a novel diagnostic statistic based on the distance between two coupled SGD sequences. The distance is easy to compute and is shown to track the distributional convergence of SGD iterates under constant stepsize theoretically. A key advantage of our method is that the two coupled iterates can be run in parallel with the same data, which avoids an increase in the wall-clock time despite the additional computational cost.

We conduct an extensive range of experiments to numerically examine the properties of our proposed coupling-based methods. Our algorithms consistently outperform existing approaches. Moreover, our method can be applied to other stochastic optimization frameworks beyond SGD. For applications like reinforcement learning, where data is often limited, our adaptive stepsize approach can make more efficient use of data and accelerate convergence.

There are several research directions one can take to extend our work. While our empirical results demonstrate robustness of our approach to the threshold for convergence diagnostic, it would be valuable to develop a principled rule for choosing this parameter. Another future direction worth pursuing is to extend our analysis for other problem classes, particularly non-convex settings that satisfy structural conditions such as dissipativity or the generalized Polyak-Lojasiewicz condition, and general stochastic approximation frameworks.

\section*{Acknowledgments}
This project is supported in part by National Science Foundation (NSF) grants CNS-1955997,  EPCN-2432546 and CAREER Award 2339794.

\bibliography{aaai25}

\begin{thebibliography}{30}
\providecommand{\natexlab}[1]{#1}

\bibitem[{Bach(2014)}]{bach2014adaptivity}
Bach, F. 2014.
\newblock Adaptivity of averaged stochastic gradient descent to local strong
  convexity for logistic regression.
\newblock arXiv:1303.6149.

\bibitem[{Bach and Moulines(2013)}]{bach2013non}
Bach, F.; and Moulines, E. 2013.
\newblock Non-strongly-convex smooth stochastic approximation with convergence
  rate O (1/n).
\newblock \emph{Advances in neural information processing systems}, 26.

\bibitem[{Bottou(2010)}]{bottou2010large}
Bottou, L. 2010.
\newblock Large-scale machine learning with stochastic gradient descent.
\newblock In \emph{Proceedings of COMPSTAT'2010: 19th International Conference
  on Computational StatisticsParis France, August 22-27, 2010 Keynote, Invited
  and Contributed Papers}, 177--186. Springer.

\bibitem[{Bottou(2012)}]{bottou2012stochastic}
Bottou, L. 2012.
\newblock Stochastic gradient descent tricks.
\newblock In \emph{Neural Networks: Tricks of the Trade: Second Edition},
  421--436. Springer.

\bibitem[{Chee and Toulis(2018)}]{chee2018convergence}
Chee, J.; and Toulis, P. 2018.
\newblock Convergence diagnostics for stochastic gradient descent with constant
  step size.
\newblock arXiv:1710.06382.

\bibitem[{Dieuleveut, Durmus, and Bach(2020)}]{dieuleveut2020bridging}
Dieuleveut, A.; Durmus, A.; and Bach, F. 2020.
\newblock {Bridging the gap between constant step size stochastic gradient
  descent and Markov chains}.
\newblock \emph{The Annals of Statistics}, 48(3): 1348 -- 1382.

\bibitem[{Foss and Tweedie(1998)}]{foss-coupling}
Foss, S.~G.; and Tweedie, R.~L. 1998.
\newblock Perfect simulation and backward coupling.
\newblock \emph{Communications in Statistics. Stochastic Models}, 14(1-2):
  187--203.

\bibitem[{Hazan and Kale(2014)}]{JMLR:v15:hazan14a}
Hazan, E.; and Kale, S. 2014.
\newblock Beyond the Regret Minimization Barrier: Optimal Algorithms for
  Stochastic Strongly-Convex Optimization.
\newblock \emph{Journal of Machine Learning Research}, 15(71): 2489--2512.

\bibitem[{He et~al.(2016)He, Zhang, Ren, and Sun}]{he2016deep}
He, K.; Zhang, X.; Ren, S.; and Sun, J. 2016.
\newblock Deep residual learning for image recognition.
\newblock In \emph{Proceedings of the IEEE conference on computer vision and
  pattern recognition}, 770--778.

\bibitem[{Huo, Chen, and Xie(2023)}]{huo2023bias}
Huo, D.; Chen, Y.; and Xie, Q. 2023.
\newblock Bias and extrapolation in Markovian linear stochastic approximation
  with constant stepsizes.
\newblock In \emph{Abstract Proceedings of the 2023 ACM SIGMETRICS
  International Conference on Measurement and Modeling of Computer Systems},
  81--82.

\bibitem[{Huo et~al.(2024)Huo, Zhang, Chen, and Xie}]{huo2024collusion}
Huo, D.; Zhang, Y.; Chen, Y.; and Xie, Q. 2024.
\newblock The Collusion of Memory and Nonlinearity in Stochastic Approximation
  With Constant Stepsize.
\newblock \emph{arXiv preprint arXiv:2405.16732}.

\bibitem[{Krizhevsky(2009)}]{krizhevsky2009learning}
Krizhevsky, A. 2009.
\newblock Learning Multiple Layers of Features from Tiny Images.
\newblock \emph{Master's thesis, University of Tront}.

\bibitem[{Krizhevsky, Sutskever, and Hinton(2012)}]{krizhevsky2012imagenet}
Krizhevsky, A.; Sutskever, I.; and Hinton, G.~E. 2012.
\newblock Imagenet classification with deep convolutional neural networks.
\newblock \emph{Advances in neural information processing systems}, 25.

\bibitem[{Lacoste-Julien, Schmidt, and Bach(2012)}]{lacoste2012simpler}
Lacoste-Julien, S.; Schmidt, M.; and Bach, F. 2012.
\newblock A simpler approach to obtaining an O (1/t) convergence rate for the
  projected stochastic subgradient method.
\newblock \emph{arXiv preprint arXiv:1212.2002}.

\bibitem[{Lang, Xiao, and Zhang(2019)}]{lang2019using}
Lang, H.; Xiao, L.; and Zhang, P. 2019.
\newblock Using statistics to automate stochastic optimization.
\newblock \emph{Advances in neural information processing systems}, 32.

\bibitem[{Lauand and Meyn(2023)}]{lauand2023curse}
Lauand, C.~K.; and Meyn, S. 2023.
\newblock The curse of memory in stochastic approximation.
\newblock In \emph{2023 62nd IEEE Conference on Decision and Control (CDC)},
  7803--7809. IEEE.

\bibitem[{Merad and Ga{\"\i}ffas(2023)}]{merad2023sgd}
Merad, I.; and Ga{\"\i}ffas, S. 2023.
\newblock Convergence and concentration properties of constant step-size SGD
  through Markov chains.
\newblock \emph{arXiv preprint arXiv:2306.11497}.

\bibitem[{Moulines and Bach(2011)}]{moulines2011non}
Moulines, E.; and Bach, F. 2011.
\newblock Non-asymptotic analysis of stochastic approximation algorithms for
  machine learning.
\newblock \emph{Advances in neural information processing systems}, 24.

\bibitem[{Needell, Ward, and Srebro(2014)}]{needell2014stochastic}
Needell, D.; Ward, R.; and Srebro, N. 2014.
\newblock Stochastic gradient descent, weighted sampling, and the randomized
  Kaczmarz algorithm.
\newblock \emph{Advances in neural information processing systems}, 27.

\bibitem[{Pasupathy, Honnappa, and Hunter(2019)}]{pasupathy2019open_problem}
Pasupathy, R.; Honnappa, H.; and Hunter, S.~R. 2019.
\newblock Open Problem—Adaptive Constant-Step Stochastic Approximation.
\newblock \emph{Stochastic Systems}, 9(3): 307--310.

\bibitem[{Pesme, Dieuleveut, and Flammarion(2020)}]{pesme2020distance}
Pesme, S.; Dieuleveut, A.; and Flammarion, N. 2020.
\newblock On Convergence-Diagnostic based Step Sizes for Stochastic Gradient
  Descent.
\newblock \emph{CoRR}, abs/2007.00534.

\bibitem[{Pflug(1983)}]{pflug1983determination}
Pflug, G.~C. 1983.
\newblock On the determination of the step size in stochastic quasigradient
  methods.

\bibitem[{Rakhlin, Shamir, and Sridharan(2011)}]{rakhlin2011making}
Rakhlin, A.; Shamir, O.; and Sridharan, K. 2011.
\newblock Making gradient descent optimal for strongly convex stochastic
  optimization.
\newblock \emph{arXiv preprint arXiv:1109.5647}.

\bibitem[{Robbins and Monro(1951)}]{robbins1951stochastic}
Robbins, H.; and Monro, S. 1951.
\newblock A stochastic approximation method.
\newblock \emph{The annals of mathematical statistics}, 400--407.

\bibitem[{Shamir and Zhang(2013)}]{shamir2013stochastic}
Shamir, O.; and Zhang, T. 2013.
\newblock Stochastic gradient descent for non-smooth optimization: Convergence
  results and optimal averaging schemes.
\newblock In \emph{International conference on machine learning}, 71--79. PMLR.

\bibitem[{Sordello, He, and Su(2020)}]{sordello2020robust}
Sordello, M.; He, H.; and Su, W. 2020.
\newblock Robust Learning Rate Selection for Stochastic Optimization via
  Splitting Diagnostic.
\newblock arXiv:1910.08597.

\bibitem[{Villani(2009)}]{villani2009optimal}
Villani, C. 2009.
\newblock \emph{Optimal transport: old and new}, volume 338.
\newblock Springer.

\bibitem[{Wang, Magnússon, and Johansson(2021)}]{wang2021convergence}
Wang, X.; Magnússon, S.; and Johansson, M. 2021.
\newblock On the Convergence of Step Decay Step-Size for Stochastic
  Optimization.
\newblock arXiv:2102.09393.

\bibitem[{Yaida(2018)}]{yaida2018fluctuationdissipation}
Yaida, S. 2018.
\newblock Fluctuation-dissipation relations for stochastic gradient descent.
\newblock arXiv:1810.00004.

\bibitem[{Yu et~al.(2021)Yu, Balasubramanian, Volgushev, and
  Erdogdu}]{yu2021nonconvex}
Yu, L.; Balasubramanian, K.; Volgushev, S.; and Erdogdu, M.~A. 2021.
\newblock An analysis of constant step size SGD in the non-convex regime:
  Asymptotic normality and bias.
\newblock \emph{Advances in Neural Information Processing Systems}, 34:
  4234--4248.

\end{thebibliography}

\clearpage
\appendix

\section{Proofs}\label{App: Proofs}
In this section, we provide missing proof of results presented in the main paper.

\subsection{A Technical Lemma}
\label{sec:tech_lemma}

We state and prove a simple technical lemma, which is used in our analysis for the general convex setting (Theorem \ref{thm: proximity-general}). 

\begin{lemma}\label{lemma: 1} If \( k_0 = \frac{4L}{\mu} \), then
\[
(1 - \gamma\mu)^{k_0} \leq 1 - 2\gamma L + \gamma^2 \mu, \quad \forall\gamma \in [0, \gamma_0],
\]
where \( \gamma_0 = \min \left\{ \frac{1}{4L}, \frac{2L}{\mu} \right\} \).
\end{lemma}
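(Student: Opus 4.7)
The plan is to sandwich the left-hand side between two elementary upper bounds—first an exponential bound, then a second-order Taylor bound—so that the comparison with the quadratic $1-2\gamma L+\gamma^{2}\mu$ reduces to a one-line algebraic check that uses precisely the cap $\gamma\le 1/(4L)$ built into $\gamma_{0}$. This is purely a calculus exercise; no properties of $f$ beyond the definitions of $L,\mu$ are needed.

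\textbf{Step 1 (exponential bound).} Using $1-x\le e^{-x}$ for all $x\in\R$ with $x=\gamma\mu$ (nonnegative on the range of interest), raise to the $k_{0}$-th power to get
\[
(1-\gamma\mu)^{k_{0}}\;\le\; e^{-k_{0}\gamma\mu}\;=\;e^{-4\gamma L},
\]
where the equality uses $k_{0}\mu=4L$.

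\textbf{Step 2 (Taylor bound on $e^{-y}$).} Apply the standard inequality $e^{-y}\le 1-y+y^{2}/2$, valid for all $y\ge 0$ (it follows from the Lagrange remainder, since the next term $-y^{3}/6\,e^{-\xi}$ is nonpositive). With $y=4\gamma L\ge 0$ this gives
\[
e^{-4\gamma L}\;\le\; 1-4\gamma L+8\gamma^{2}L^{2}.
\]

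\textbf{Step 3 (algebraic comparison).} Combining Steps 1--2, it suffices to prove
\[
1-4\gamma L+8\gamma^{2}L^{2}\;\le\;1-2\gamma L+\gamma^{2}\mu,
\]
i.e.\ $8\gamma^{2}L^{2}-\gamma^{2}\mu\le 2\gamma L$. Dividing by $\gamma>0$ (the case $\gamma=0$ is trivial), this is $\gamma(8L^{2}-\mu)\le 2L$. Since $\gamma\le 1/(4L)$, we have $8\gamma L^{2}\le 2L$, and since $\gamma\mu\ge 0$,
\[
\gamma(8L^{2}-\mu)\;=\;8\gamma L^{2}-\gamma\mu\;\le\;2L-0\;=\;2L,
\]
so the bound holds (regardless of the sign of $8L^{2}-\mu$).

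\textbf{Where the difficulty lies.} There is no real obstacle; the only subtlety is tuning the intermediate quadratic bound so that the ``slack'' of $-4\gamma L$ versus $-2\gamma L$ can absorb the extra $8\gamma^{2}L^{2}$ term. The cap $\gamma\le 1/(4L)$ in the definition of $\gamma_{0}$ is exactly what makes $8\gamma L^{2}\le 2L$, and the second piece $2L/\mu$ of the minimum plays no role here beyond keeping $1-\gamma\mu$ in a convenient regime. Should one prefer to avoid the exponential bound, an equivalent approach is to define $g(\gamma):=1-2\gamma L+\gamma^{2}\mu-(1-\gamma\mu)^{k_{0}}$, check $g(0)=0$, and verify $g'(\gamma)\ge 0$ on $[0,\gamma_{0}]$, but the three-line chain above is shorter.
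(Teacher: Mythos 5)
Your proof is correct and follows essentially the same route as the paper: both first bound $(1-\gamma\mu)^{k_0}\le e^{-k_0\gamma\mu}=e^{-4\gamma L}$ and then exploit the cap $\gamma\le 1/(4L)$. The only cosmetic difference is the final elementary inequality --- the paper uses $e^{-x}\le 1-x/2$ for $x\in[0,1]$ to land directly at $1-2\gamma L$, whereas you use the Taylor bound $e^{-y}\le 1-y+y^{2}/2$ and absorb the extra $8\gamma^{2}L^{2}$ term algebraically; both hinge on exactly the same use of $4\gamma L\le 1$.
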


\begin{proof}
    By inequality \( e^z \geq 1 + z \) for all \( z \in \mathbb{R} \), we have
    \[
    (1 - \gamma\mu)^{k_0} \leq e^{-{k_0}\gamma\mu}.
    \]
    Note that \( {k_0}\gamma\mu =  4L\gamma \leq 1 \). Using the inequality \( e^{-x} \leq 1 - \dfrac{x}{2} \) for \( 0 \leq x \leq 1 \), we have
    \[
    e^{-{k_0}\gamma\mu} \leq 1 - \dfrac{{k_0}\gamma\mu}{2} = 1 - 2\gamma L \leq 1 - 2\gamma L + \gamma^2 \mu.
    \]
    Combining the above two inequalities completes the proof.
\end{proof}

\subsection{Proof of Proposition \ref{proposition: Dk_quadratic}}

\begin{proof}
Define $D_k := \theta^{(1)}_k - \theta^{(2)}_k$, we have
\begin{align*}
D_k &= \theta^{(1)}_k - \theta^{(2)}_k\\
&= \theta^{(1)}_{k-1} - \gamma(H\theta^{(1)}_{k-1}-\xi_k)\\
&\quad - [\theta^{(2)}_{k-1} - \gamma(H\theta^{(2)}_{k-1}-\xi_k)]\quad\text{noise is identical}\\
&= \theta^{(1)}_{k-1} - \gamma H\theta^{(1)}_{k-1} - (\theta^{(2)}_{k-1} - \gamma H\theta^{(2)}_{k-1})\\
&= (I-\gamma H)(\theta^{(1)}_{k-1} - \theta^{(2)}_{k-1})\\
&= (I-\gamma H)D_{k-1}
\end{align*}
By straightforward induction, we can derive the following relationships:
\begin{equation*}
D_k = (I-\gamma H)^kD_{0}
\end{equation*}
Therefore, taking the expectation gives
\begin{align*}
    \mathbb{E}[\|D_k\|^2] &= \mathbb{E}[\|(I-\gamma H)^kD_0\|^2]\\
    &= \mathbb{E}[D_0^\top(I-\gamma H)^{2k}D_0].
\end{align*}

By Assumption \ref{assumption: bounded_variance}, the smallest eigenvalue of $H$ is $\mu.$ With $\gamma\in (0,1/L),$ we have 
\begin{align*}
    \mathbb{E}[\|D_k\|^2] &\leq (1-\gamma \mu)^{2k} \mathbb{E}[\|D_0\|^2].
\end{align*}
\end{proof}

\subsection{Proof of Claim \ref{claim:W2_quadratic}}

\begin{proof}
    The proof follows from the proof of Proposition 2 in \citet{dieuleveut2020bridging}, by using the fact that the objective function $f$ is quadratic. 
    Let $\gamma\in(0,2/L)$ and $\nu_{1},\nu_{2}\in\CP_{2}(\R^{d})$.
By \citet[Theorem   4.1]{villani2009optimal}, there exists a couple of random
variables $\theta_{0}^{(1)},\theta_{0}^{(2)}$ such that $W_{2}^{2}(\nu_{1},\nu_{2})=\E\left[\norm{\theta_{0}^{(1)}-\theta_{0}^{(2)}}^{2}\right]$
independent of $(\epsilon_{k})_{k\in\N}$. Let $(\theta_{k}^{(1)})_{k\geq0}$,$(\theta_{k}^{(2)})_{k\geq0}$
be the SGD iterates with the same constant stepsize $\gamma$, and
sharing the same noise, but starting from $\theta_{0}^{(1)}$ and
$\theta_{0}^{(2)}$ respectively . That is, for all $k\geq0$, 
\begin{equation}
\begin{cases}
\theta_{k+1}^{(1)} & =\theta_{k}^{(1)}-\gamma\big[f'(\theta_{k}^{(1)})+\varepsilon_{k}\big]\\
\theta_{k+1}^{(2)} & =\theta_{k}^{(2)}-\gamma\big[f'(\theta_{k}^{(2)})+\varepsilon_{k}\big].
\end{cases}\label{eq:def_coupling}
\end{equation}
Since $\theta_{0}^{(1)},\theta_{0}^{(2)}$ are independent of $\varepsilon_{1}$,
we have for $i,j\in\{1,2\}$, that 
\begin{equation}
\E[\langle\theta_{0}^{(i)},\varepsilon(\theta_{0}^{(j)})\rangle]=0.\label{eq:indep_noise_initial_cond}
\end{equation}
We slightly overload the notation, letting $P_{\gamma}^{k}(\nu,\cdot):=\int_{\R^{d}}\nu(d\theta_{0})P_{\gamma}^{k}(\theta_{0},\cdot)$
denote the distribution of the $k$-th iterate when initial $\theta_{0}\sim\nu$.
By the definition of the Wasserstein distance we get 
\begin{align*}
 & W_{2}^{2}\big(P_{\gamma}(\nu_{1},\cdot),P_{\gamma}(\nu_{2},\cdot)\big)\le\E\left[\|\theta_{1}^{(1)}-\theta_{2}^{(2)}\|^{2}\right]\\
 & =\E\left[\|\theta_{0}^{(1)}-\gamma f'(\theta_{0}^{(1)})-(\theta_{0}^{(2)}-\gamma f'(\theta_{0}^{(2)})))\|^{2}\right]\\
 & =\E\left[\|(I-\gamma H)\big(\theta_{0}^{(1)}-\theta_{0}^{(2)}\big)\|^{2}\right]\\
 & \leq(1-\gamma\lambda_{\min})^{2}\E\left[\norm{\theta_{0}^{(1)}-\theta_{0}^{(2)}}^{2}\right],
\end{align*}

By induction, setting $\rho:=(1-\gamma\lambda_{\min})^{2}$, we get,
\begin{align*}
W_{2}^{2}\big(P_{\gamma}^{k}(\nu_{1},\cdot),P_{\gamma}^{k}(\nu_{2},\cdot)\big) & \le\E\left[\|\theta_{k}^{(1)}-\theta_{k}^{(2)}\|^{2}\right]\\
 & \leq\rho\E\left[\|\theta_{k-1}^{(1)}-\theta_{k-1}^{(2)}\|^{2}\right]\\
 & \leq\rho^{k}W_{2}^{2}(\nu_{1},\nu_{2})
\end{align*}
For any given $\theta_{0}\in\R^{d},$ by taking $\nu_{1}=\delta_{\theta_{0}},$
and $\nu_{2}=\pi_{\gamma},$ and using the fact that the limit distribution
$\pi_{r}$ being invariant \cite[Proposition 2]{dieuleveut2020bridging}, we have
\begin{align*}
    W_{2}^{2}\big(P_{\gamma}^{k}(\theta_{0},\cdot),\pi_{\gamma}\big)&\leq\rho^{k}W_{2}^{2}(\delta_{\theta_{0}},\pi_{\gamma})\\
    &\leq\rho^{k}\E_{\theta\sim\pi_{\gamma}}\left[\norm{\theta_{0}-\theta}^{2}\right].
\end{align*}
\end{proof}

\section{Supplementary Experiments}\label{App: Additional_Experiments}
In this section, we present supplementary experiments and additional results to further support our main findings discussed in the paper.

\subsection{Additional results on Logistic regression and Least squares regression}

\begin{figure*}[htb]
    \centering
    \subfigure{
        \includegraphics[width=0.43\textwidth]{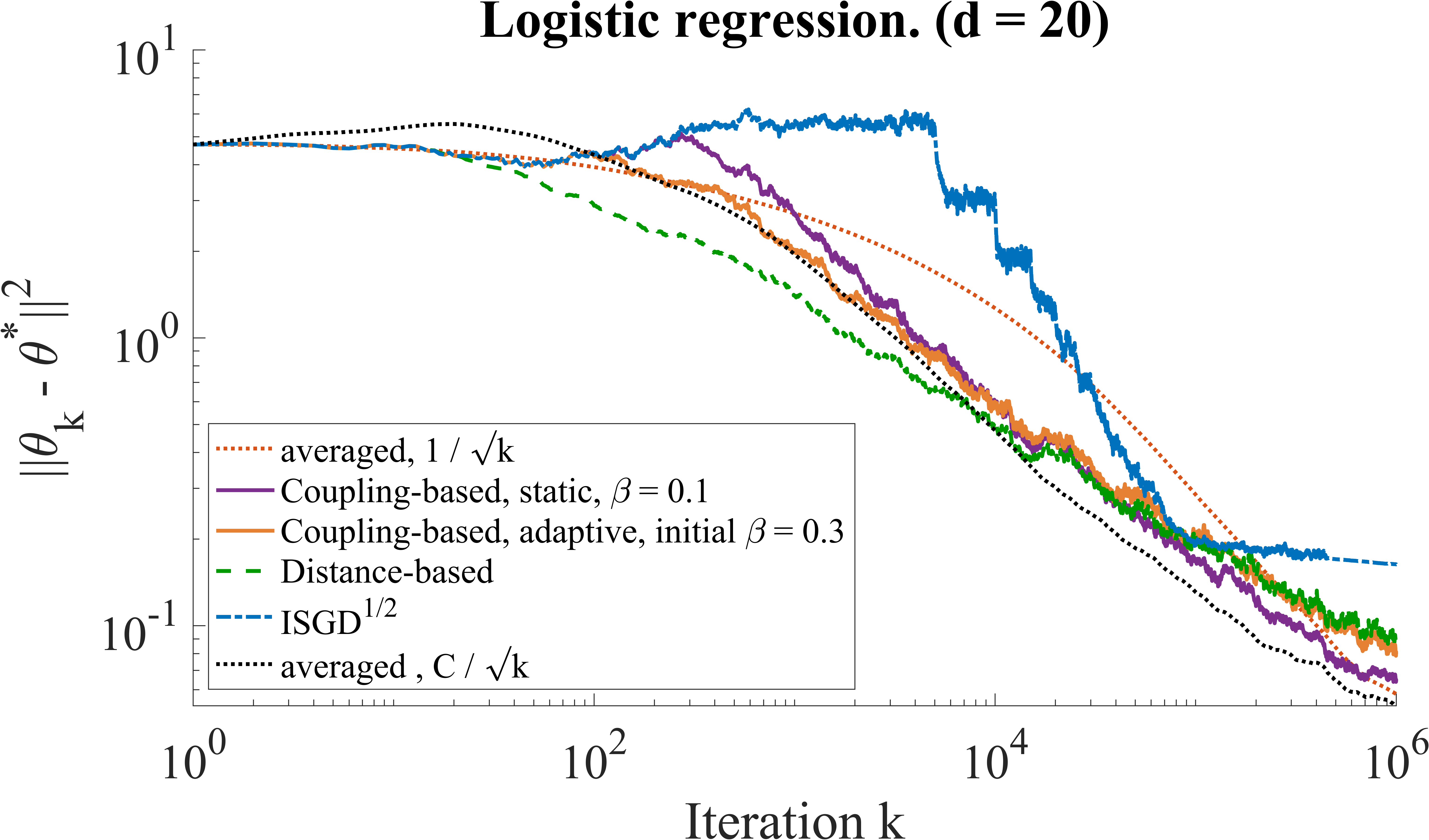}
    }
    \subfigure{
        \includegraphics[width=0.43\textwidth]{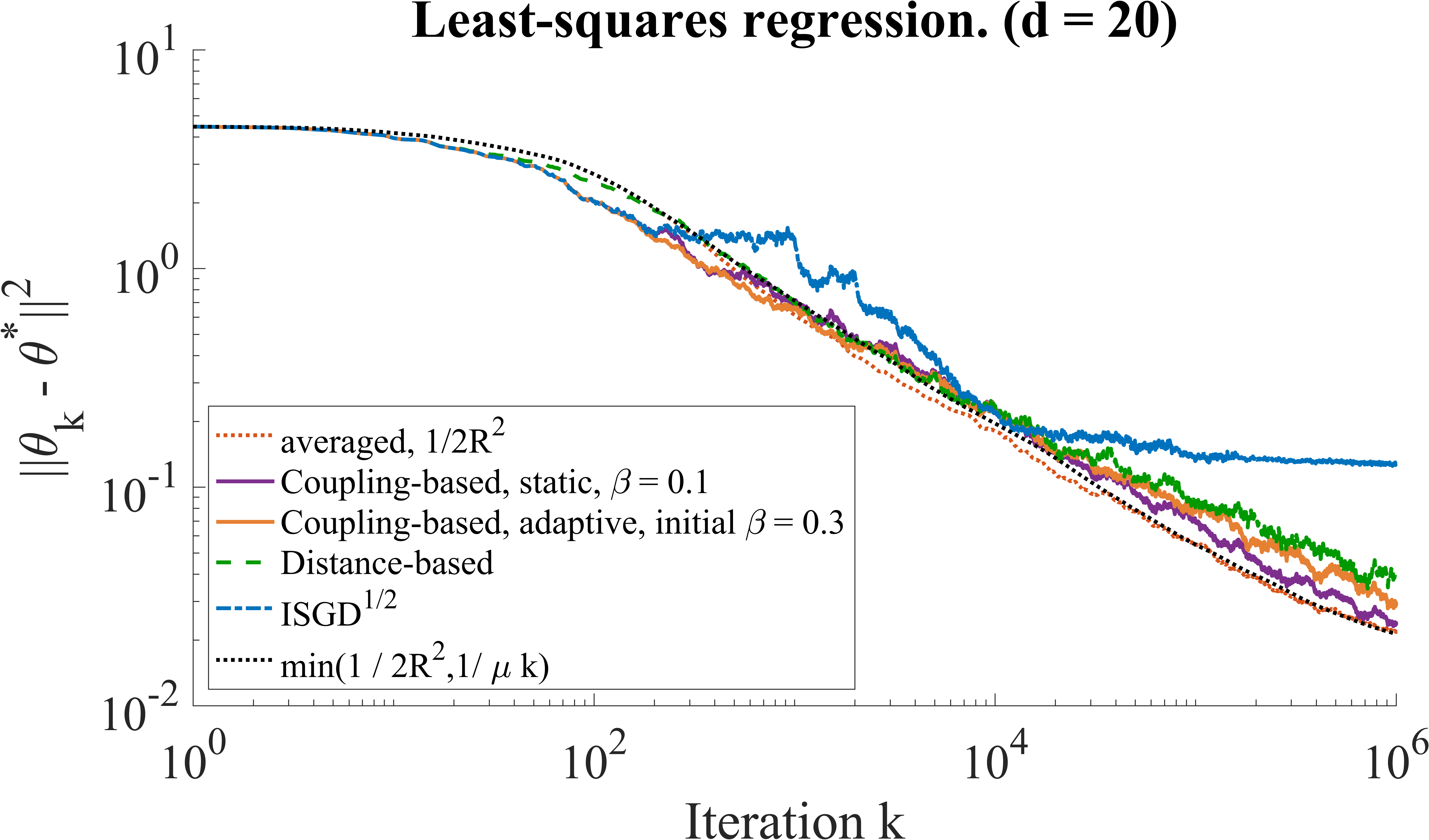}
    }
    \subfigure{
        \includegraphics[width=0.43\textwidth]{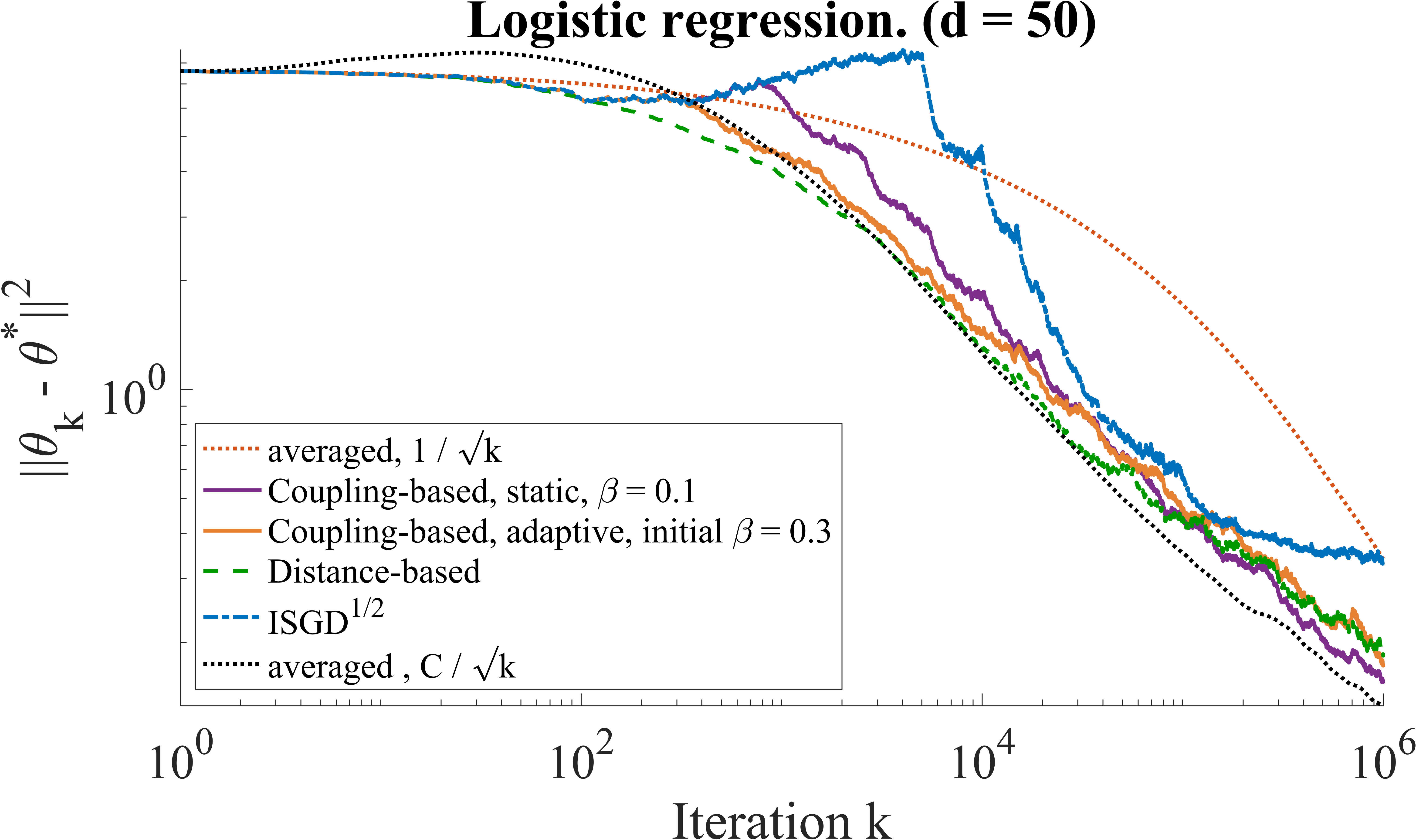}
    }
    \subfigure{
        \includegraphics[width=0.43\textwidth]{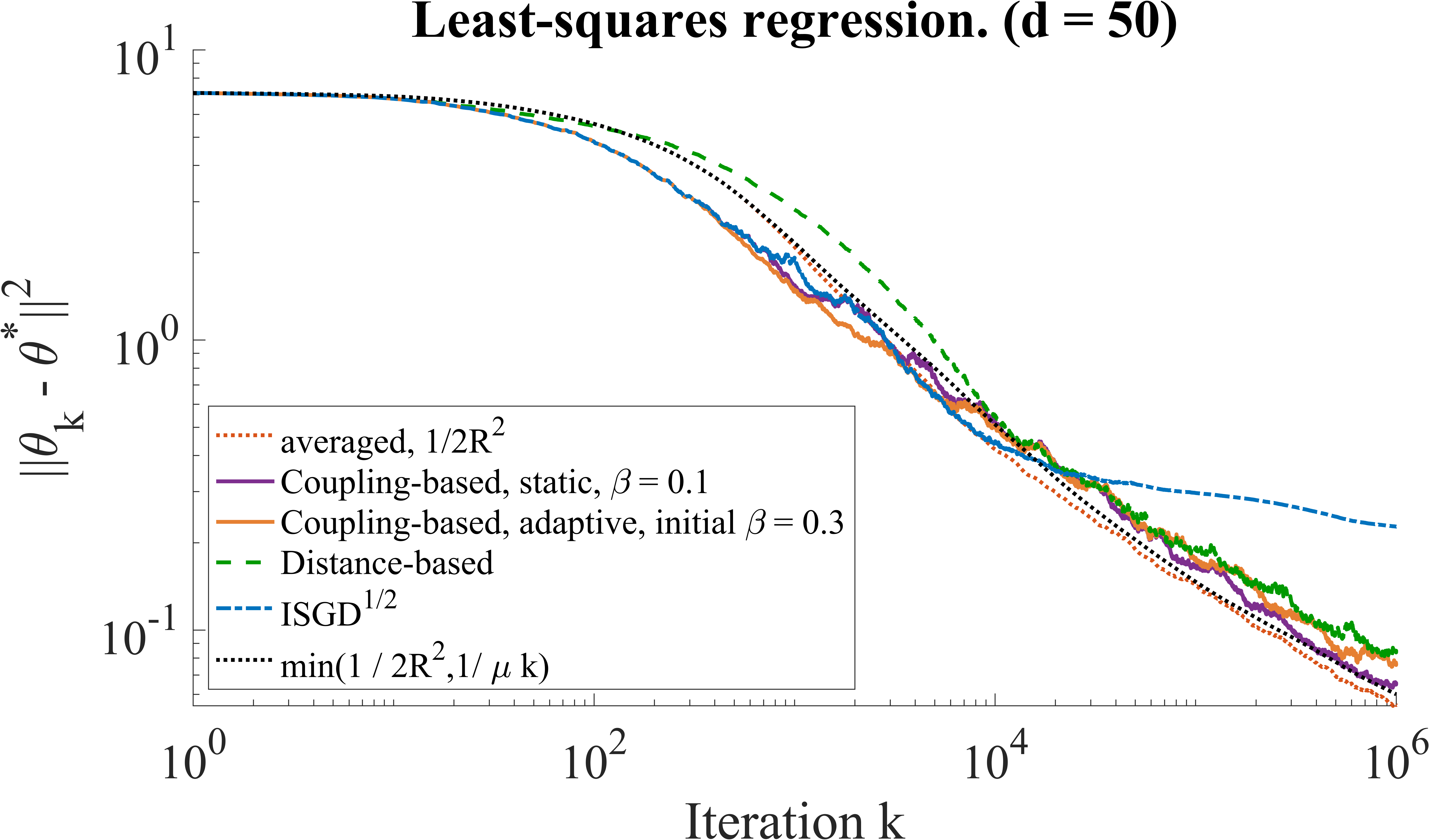}
    }
    \subfigure{
        \includegraphics[width=0.43\textwidth]{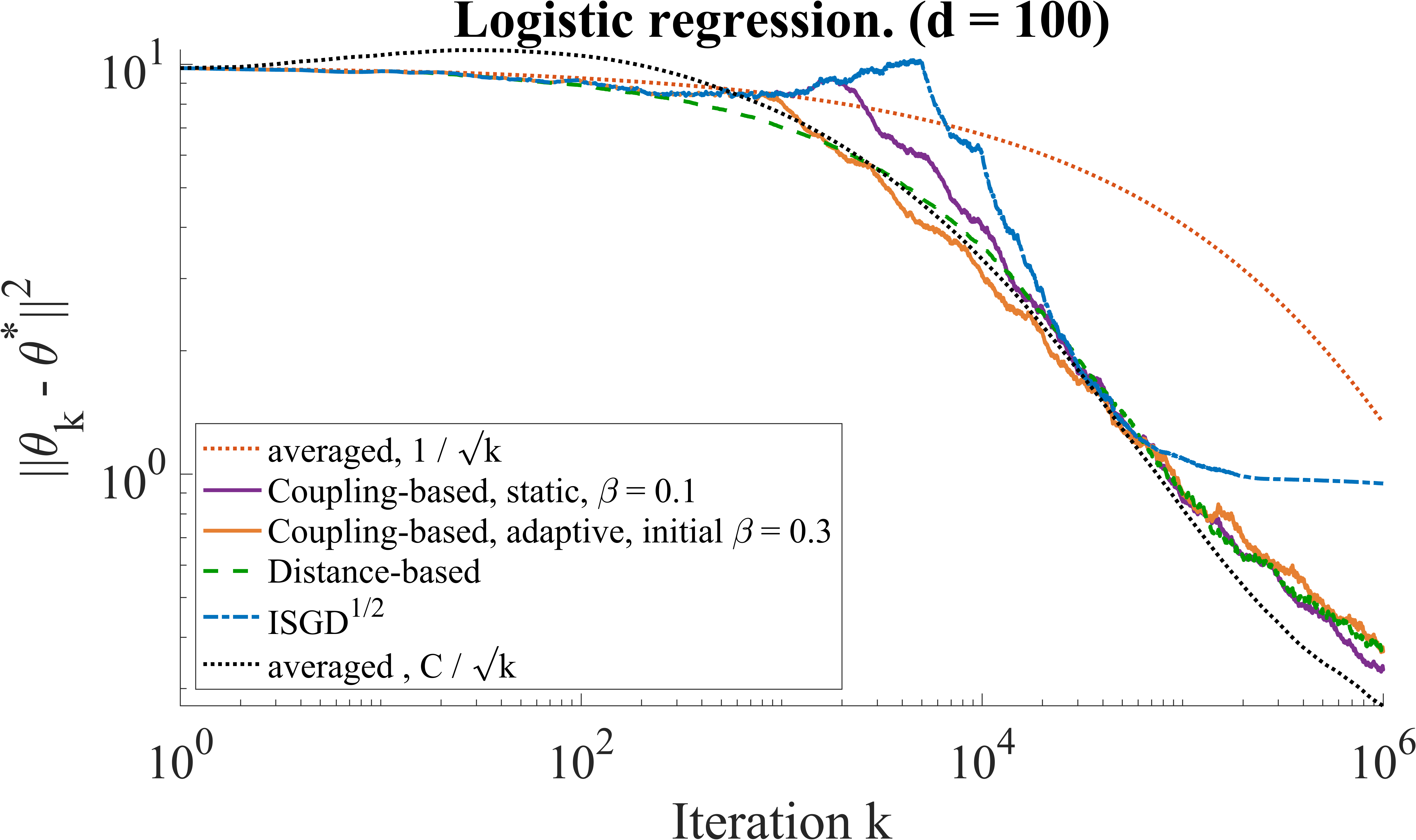}
    }
    \subfigure{
        \includegraphics[width=0.41\textwidth]{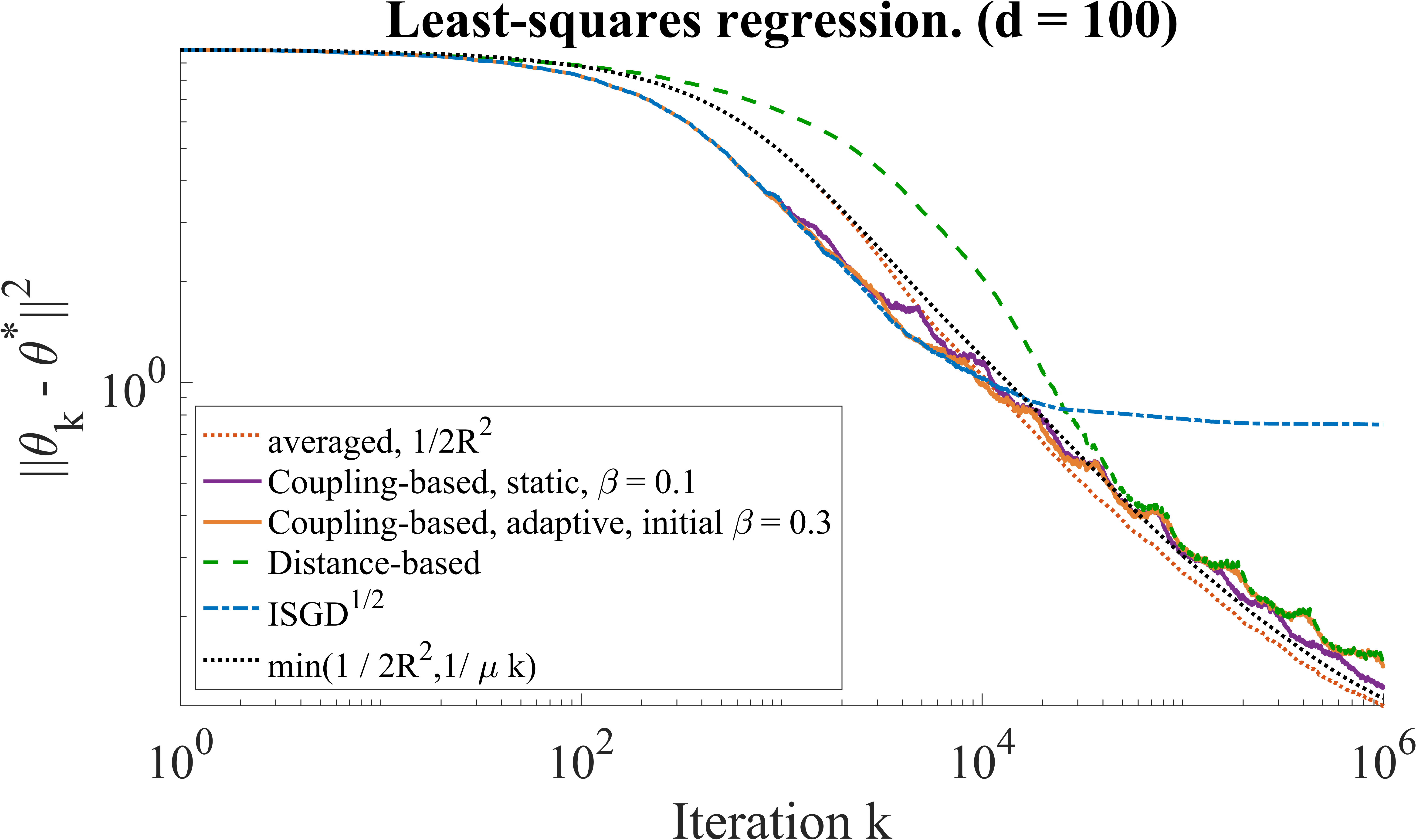}
    }
    \caption{Logistic regression (left) and Least squares regression (right) with different dimensions $d=(20,50,100)$.
    The initial stepsize of coupling/distance-based and $\text{ISGD}^{1/2}$ is $\gamma_0 = 4/R^2$ for logistic regression, and $\gamma_0 = 1/2R^2$ for least squares. The errors are averaged over $10$ replications.}
    \label{fig:LS_LSR_other_dim}
\end{figure*}

\begin{figure*}[htbp]
    \centering
    \subfigure[Logistic regression: different back steps $b$]{
    \includegraphics[width=0.43\textwidth]{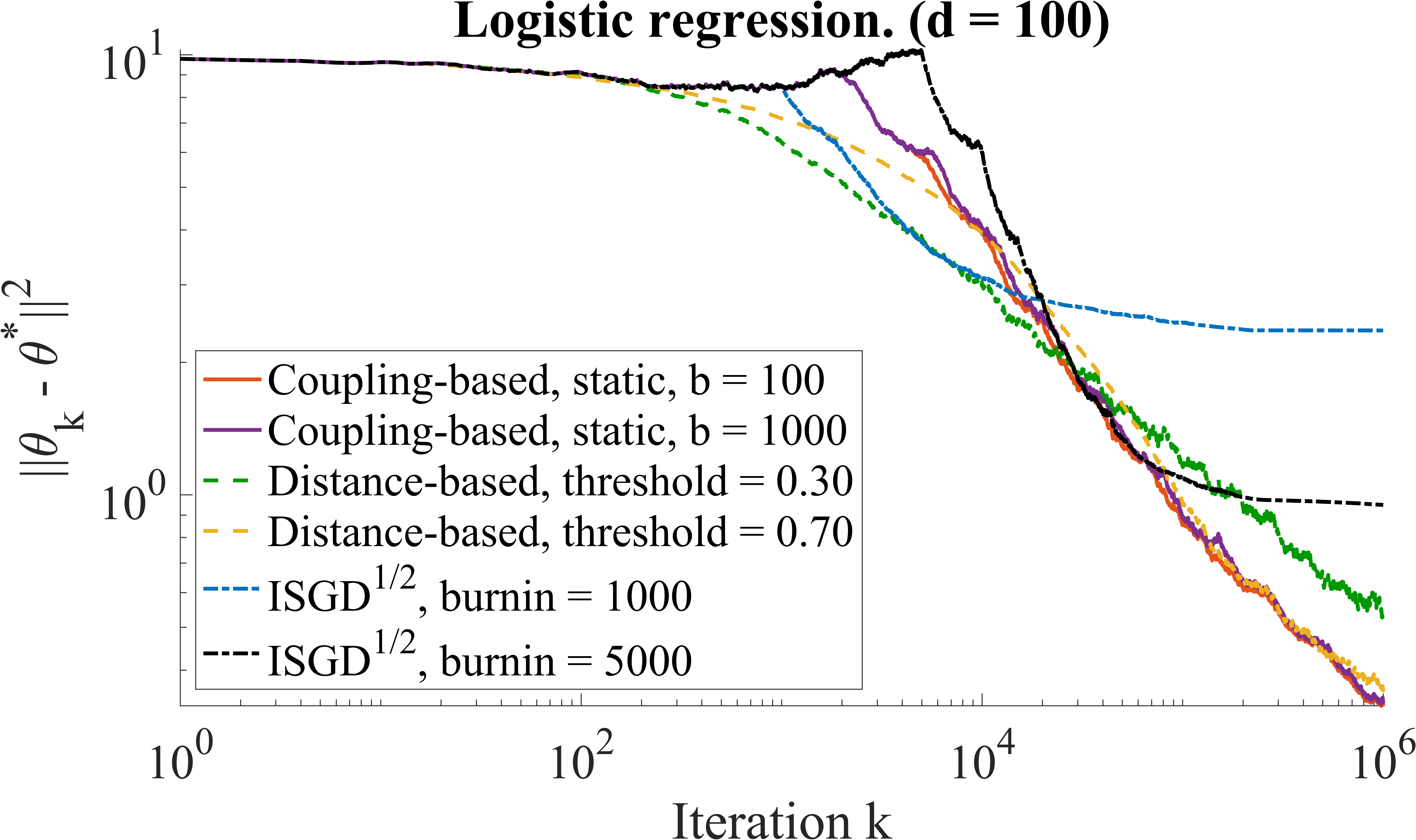}\label{fig: Robustness_LR_bstep_static}
    }
    \subfigure[LSR: different back steps $b$]{
    \includegraphics[width=0.43\textwidth]{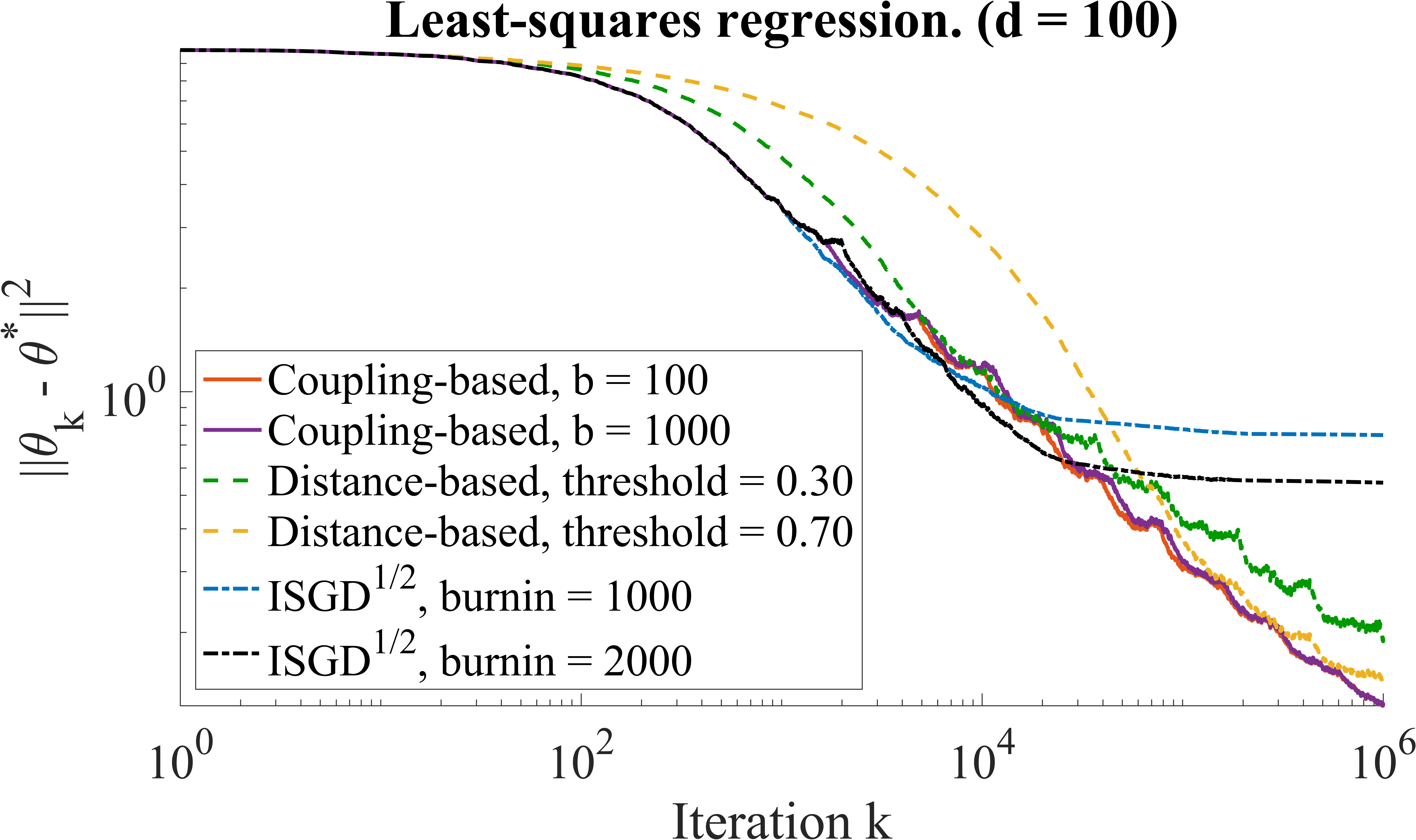}\label{fig: Robustness_LSR_bstep_static}
    }
    \caption{Robustness results under logistic regression and least squares regression (LSR) with $d = 100$ for Algorithm \ref{alg:static}.}
    \label{fig: Robustness_static_b}
\end{figure*}

We provide additional results on logistic regression and least squares regression settings, as described in Section \ref{sec: Experiments}, but with higher dimensions $d=(20,50,100)$. The results are presented in Figure \ref{fig:LS_LSR_other_dim}, where the left and right column corresponds to the logistic regression and the least square regression, respectively. We observe that our coupling-based algorithms with static/adaptive threshold achieves similar performance across all the settings. For logistic regression, the performance of our algorithms is comparable to that of the averaged SGD with the best-tuned decay stepsize $\gamma_k=C/\sqrt{k}$ across different dimensions. We note that the distance-based algorithm with best-tuned parameters converges slightly faster than our methods initially for logistic regression with $d=20,$ but incurring larger error at the end. The difference between our methods and the distance-based method becomes negligible for higher dimensional settings. For least squares regression, our algorithms consistently outperform other methods, and almost match the best performance of SGD with diminishing stepsize $\gamma_k=1/{\mu k}.$ The $\text{ISGD}^{1/2}$ algorithm leads to poor performance across all the settings, mainly due to the early on restart and frequent stepsize reduction. 

\subsection{Additional robustness results}

In Section \ref{sec: Robustness Results}, we have demonstrated the robustness of our coupling-based algorithm with static threshold, Algorithm \ref{alg:static}, to hyper-parameters including the stepsize decay factor $r$ and the initial threshold $\beta.$ We also study the sensitivity of our algorithm w.r.t.\ the backward steps parameter $b$ for the re-initialization of the auxiliary sequence. Figure \ref{fig: Robustness_static_b} shows that Algorithm \ref{alg:static} is robust to a wide range of the parameter $b$ values. 

We also investigate the sensitivity of our method with adaptive threshold, Algorithm \ref{alg:adaptive}, w.r.t.\ all the hyper-parameters: (1) the stepsize decay factor $r$; (2) the initial threshold $\beta$; (3)  the backward steps $b$; (4) the threshold decay factor $\eta$. We consider the same settings as that for Algorithm \ref{alg:static}, including logistic regression and least-squares regression with dimension $d=100.$ We report the results in Figure \ref{fig: Robustness_adaptive}, where each column corresponds to different setting, and each row corresponds to variations of different hyper-parameters (with other parameters fixed). From Figure \ref{fig: Robustness_LR_r_adaptive}--\ref{fig: Robustness_LSR_bstep_adaptive}, we observe that Algorithm \ref{alg:adaptive} enjoys similar robustness as Algorithm \ref{alg:static} w.r.t.\ a wide range values of the hyper-parameters $r$, initial threshold $\beta$ and $b.$ 
{For the threshold decay factor $\eta,$ Figure \ref{fig: Robustness_LR_eta}--\ref{fig: Robustness_LSR_eta} show that a small $\eta$ leads to slightly worse performance for both settings. We note that a small $\eta$ indicates a large reduction of the threshold at each restart, which would result in a more conservative convergence diagnostic. Our experiment results imply that choosing a slightly larger $\eta$, for instance, $\eta=0.75$ would yield robust performance. }

\begin{figure*}[htbp]
    \centering
        \subfigure[Logistic regression: different stepsize decay factor $r$]{
        \includegraphics[width=0.43\textwidth]{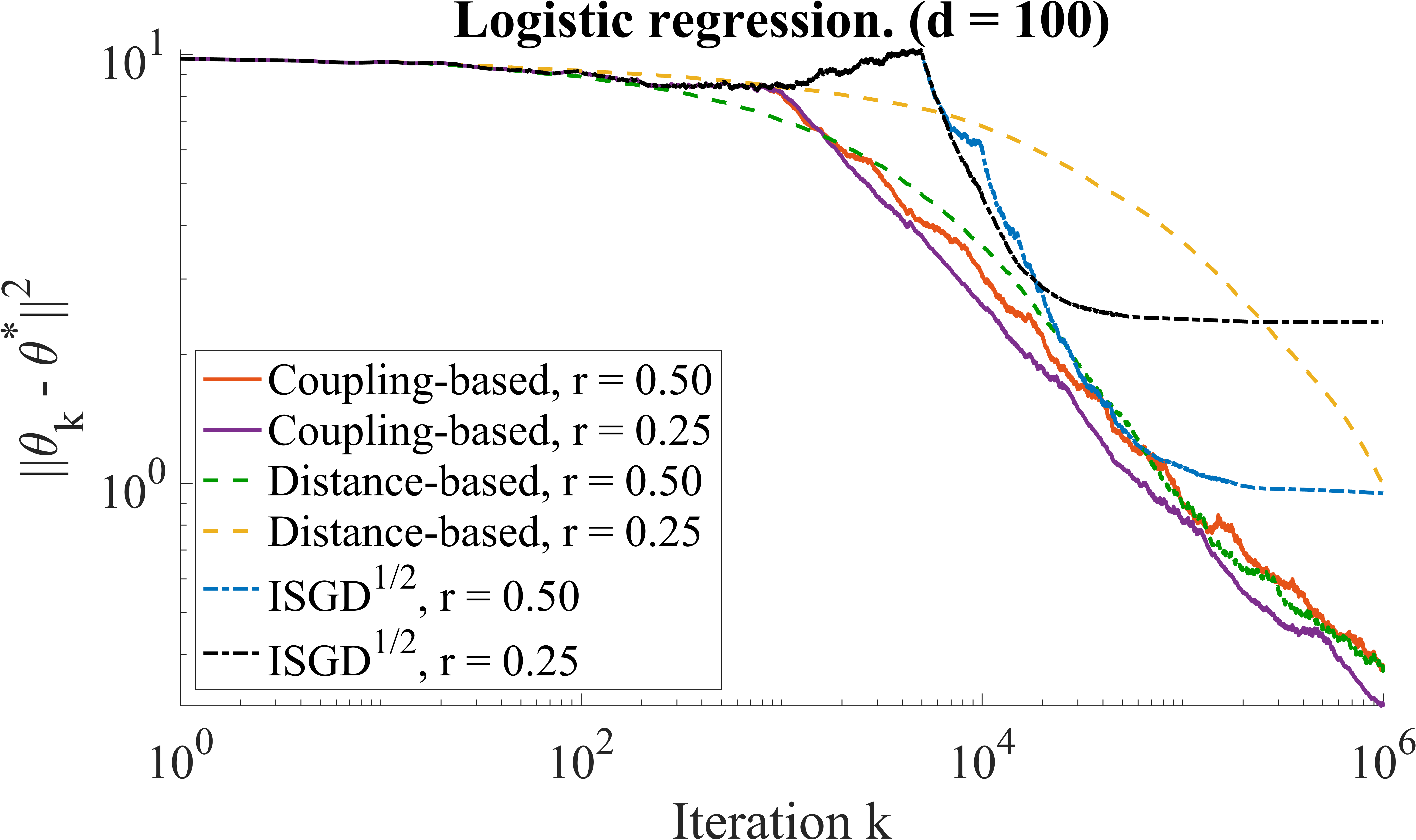} \label{fig: Robustness_LR_r_adaptive}
    }
    \subfigure[LSR: different stepsize decay factor $r$]{
        \includegraphics[width=0.43\textwidth]{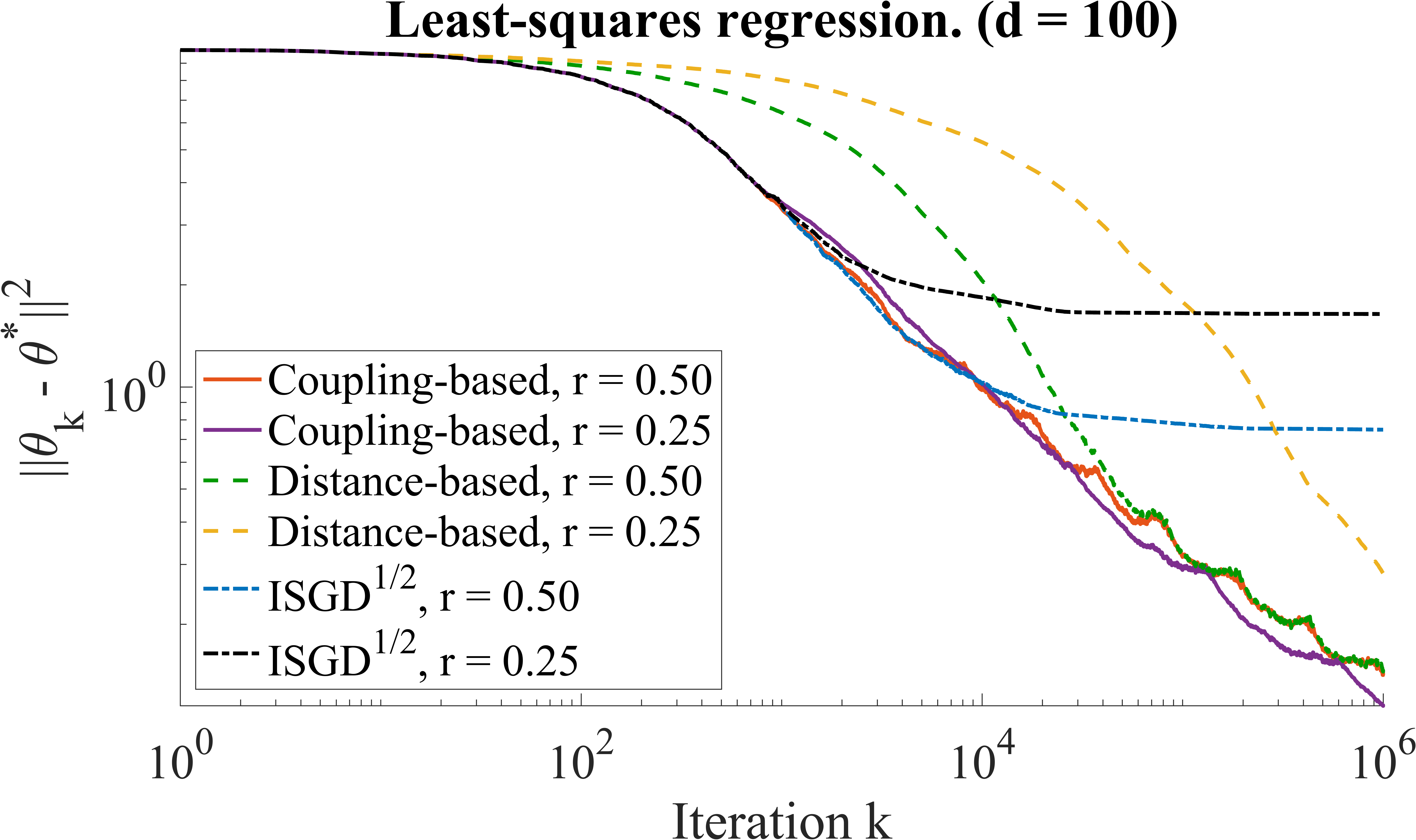}\label{fig: Robustness_LSR_r_adaptive}
    }
    \subfigure[Logistic regression: different initial threshold $\beta$]{
        \includegraphics[width=0.43\textwidth]{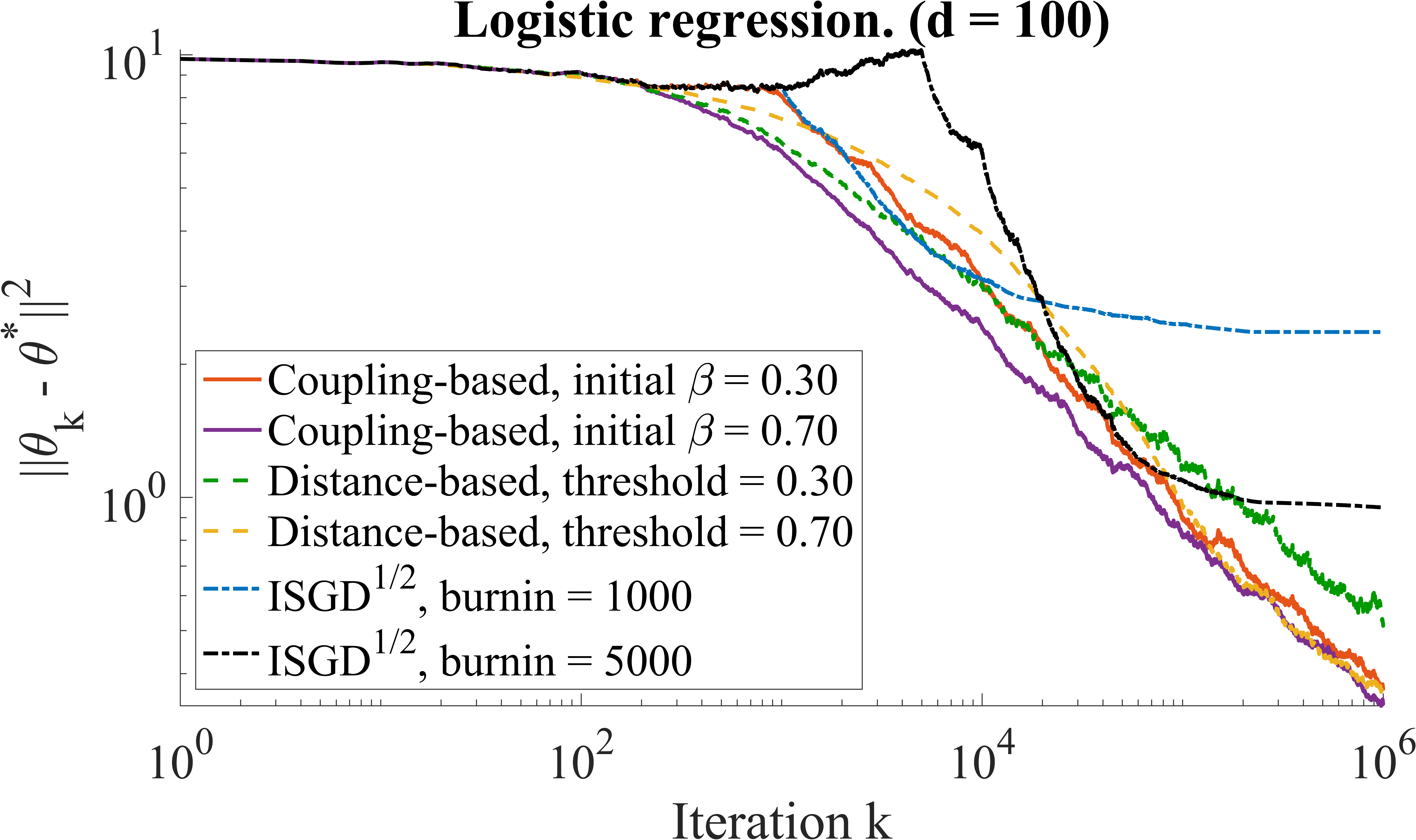} \label{fig: Robstuness_LR_thresh_adaptive}
    }
    \subfigure[LSR: different initial threshold $\beta$]{
        \includegraphics[width=0.43\textwidth]{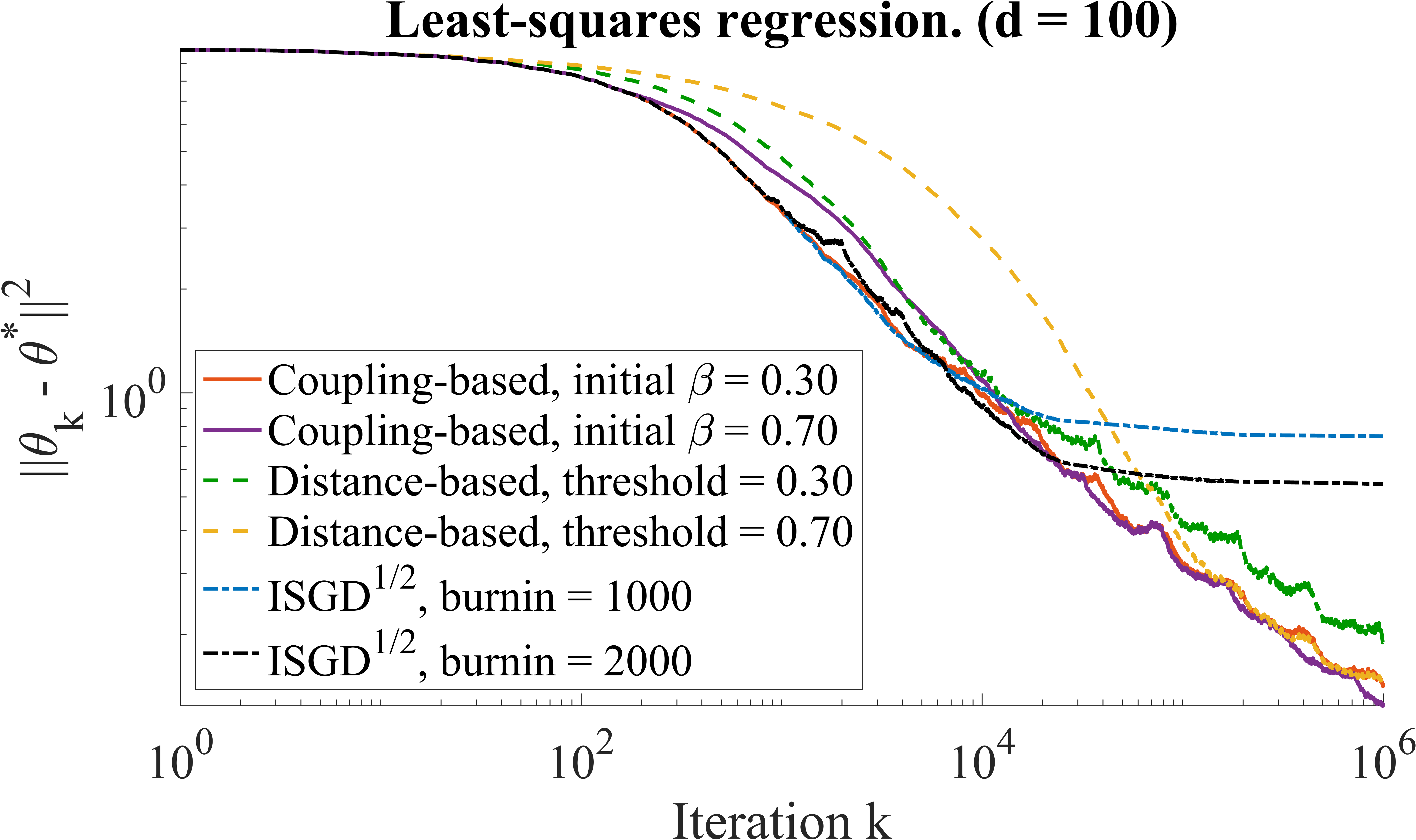} \label{fig: Robustness_LSR_thresh_adaptive}
    }
    \subfigure[Logistic regression: different back steps $b$]{
    \includegraphics[width=0.43\textwidth]{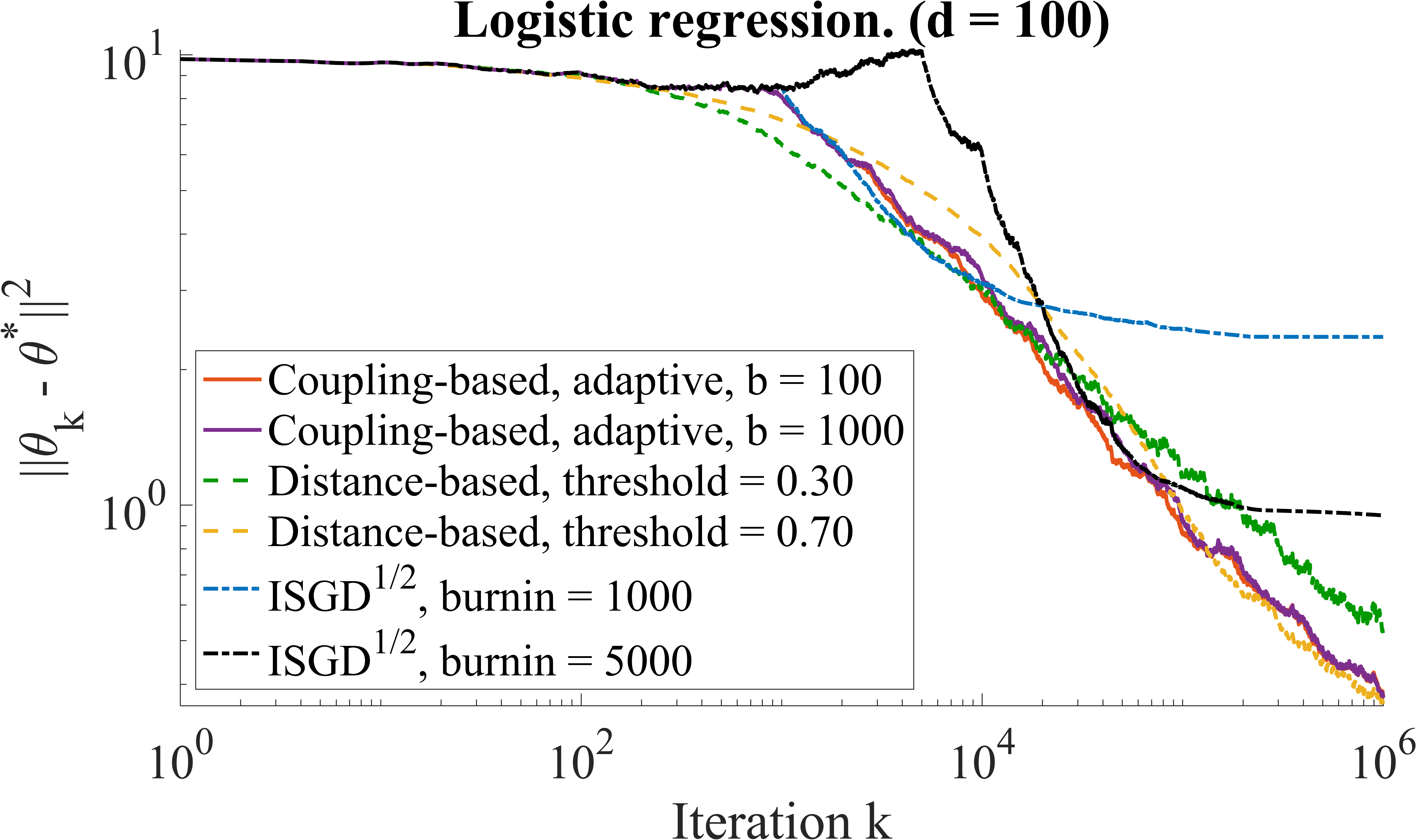}\label{fig: Robustness_LR_bstep_adaptive}
    }
    \subfigure[LSR: different back steps $b$]{
    \includegraphics[width=0.43\textwidth]{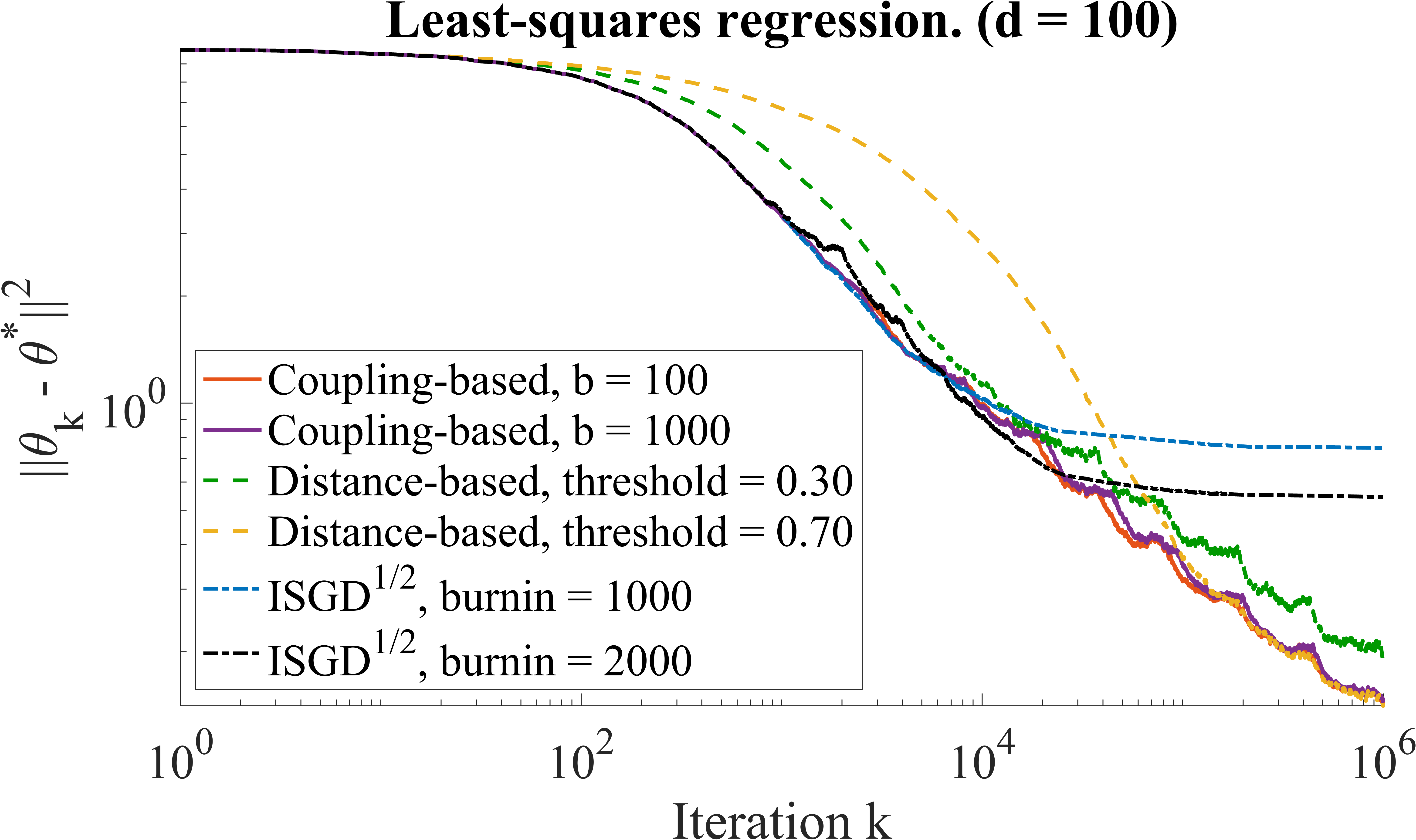}\label{fig: Robustness_LSR_bstep_adaptive}
    }
    \subfigure[Logistic regression: different  threshold decay factor $\eta$]{
    \includegraphics[width=0.43\textwidth]{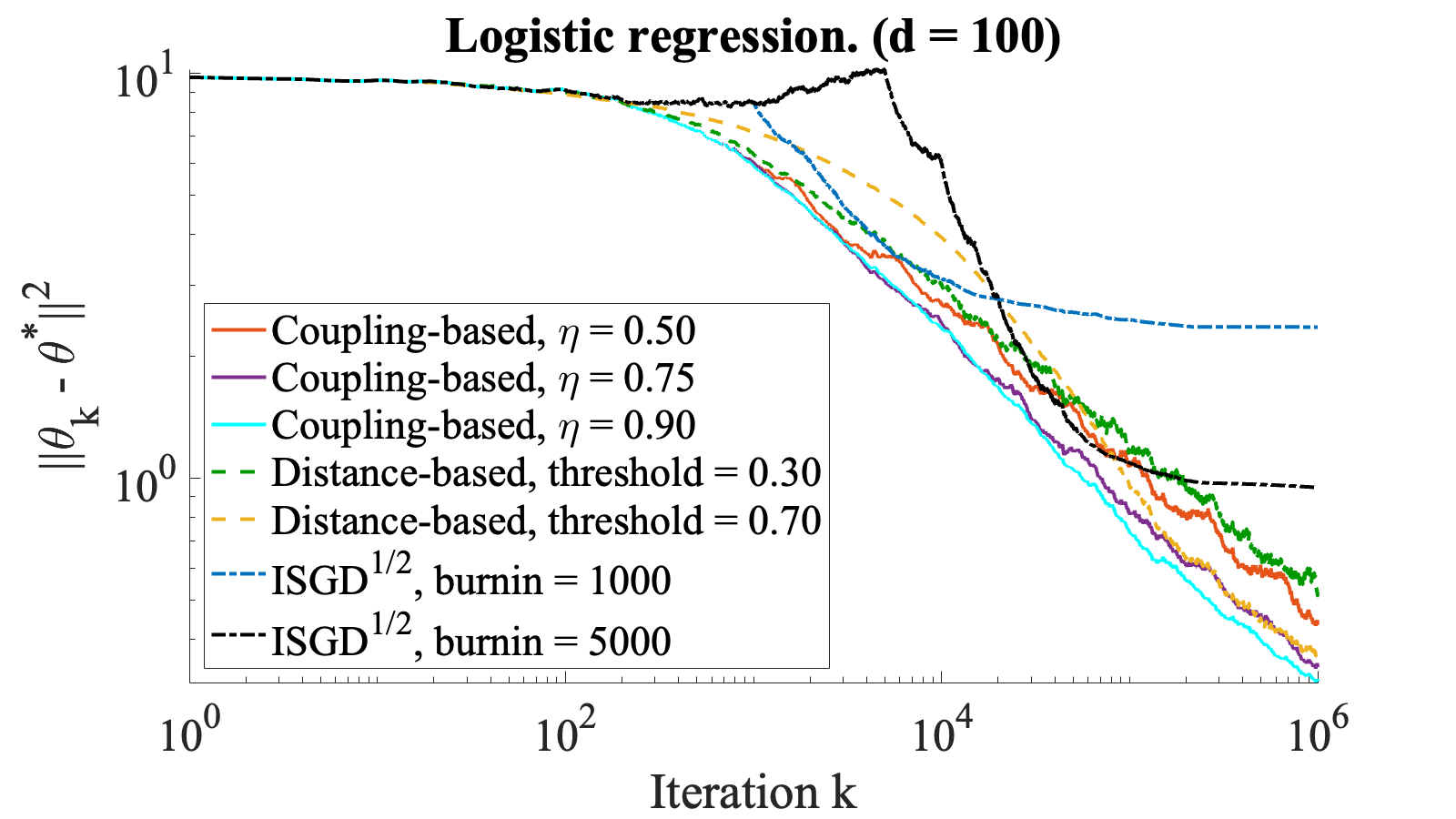}\label{fig: Robustness_LR_eta}
    }
    \subfigure[LSR: different  threshold decay factor $\eta$]{
    \includegraphics[width=0.43\textwidth]{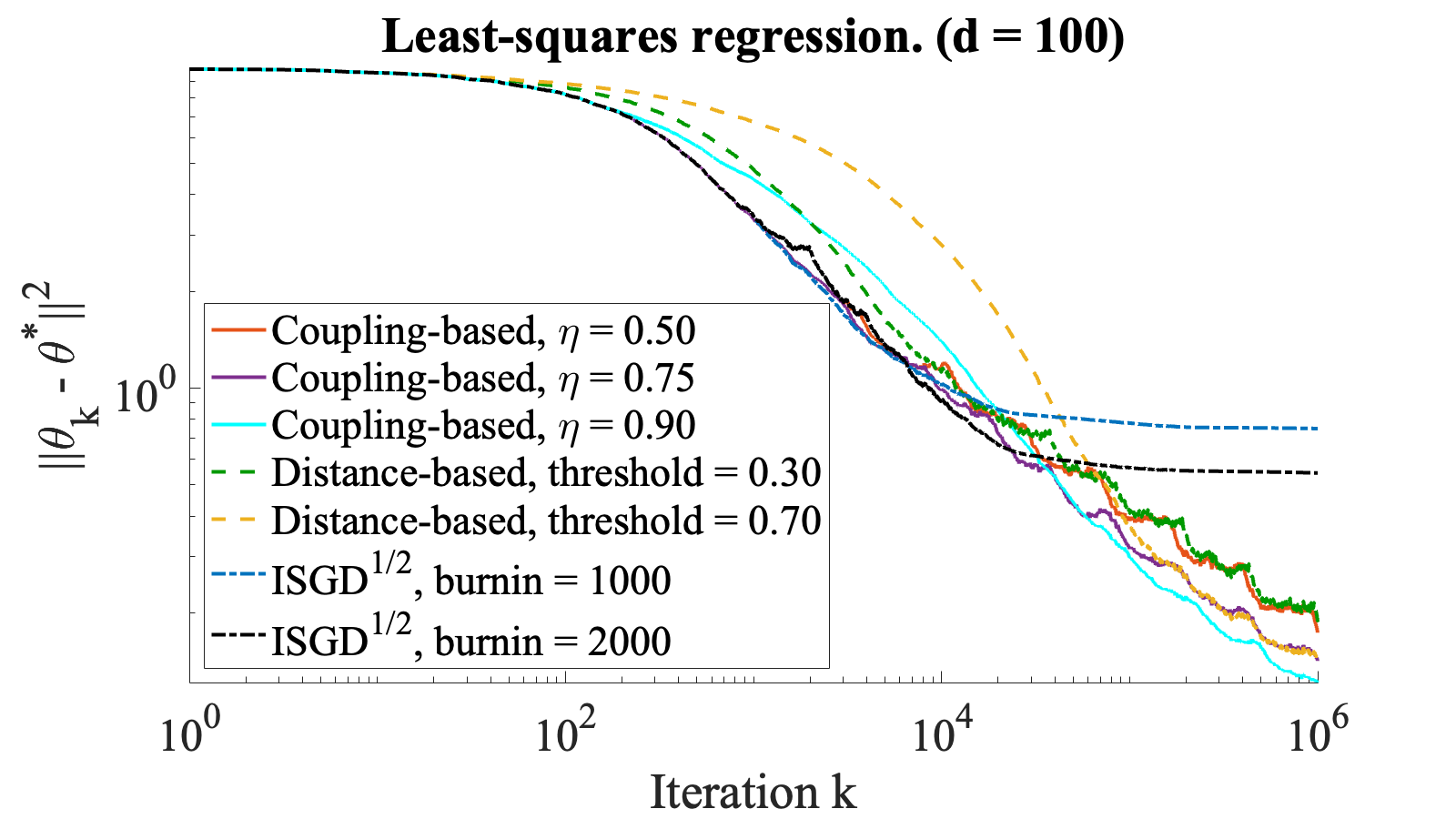}\label{fig: Robustness_LSR_eta}
    }
    \caption{Robustness results under least squares regression (LSR) and logistic regression with $d = 100$ for Algorithm \ref{alg:adaptive}.}
    \label{fig: Robustness_adaptive}
\end{figure*}

\subsection{Experiments on other settings}

To provide a comprehensive assessment on the effectiveness of our methods, we conduct more experiments on additional settings.

\paragraph{SVM.}
The objective function $f$ is given by $f(\theta) = \mathbb{E}\left[\max(0, 1 - y_i \langle x_i, \theta \rangle)\right] + \frac{\lambda}{2} \|\theta\|^2,$ where $ \lambda > 0$ is the regularization parameter. Note that $f$ is strongly convex with parameter $\lambda = 0.1$ and is non-smooth. The inputs $x_i$ are i.i.d.\ drawn from $\mathcal{N}(0, \sigma^2 \mathbf{I}_d)$, where $\mathbf{I}_d\in \R^{d\times d}$ is the identity matrix with $d=20$. The outputs $y_i$ are generated as $y_i = \text{sgn}(x_i^{(1)} + z_i)$, where $z_i \sim \mathcal{N}(0, \sigma^2)$. For additional baselines, we also include the averaged-SGD with stepsize $\gamma_k=1/{\lambda k},$ as well as the averaged-SGD with stepsize $\gamma_k=C/{\sqrt{k}}$ with the parameter $C$ tuned to achieve the best performance. 

\begin{figure}[ht]
    \centering
\includegraphics[width=0.95\columnwidth]{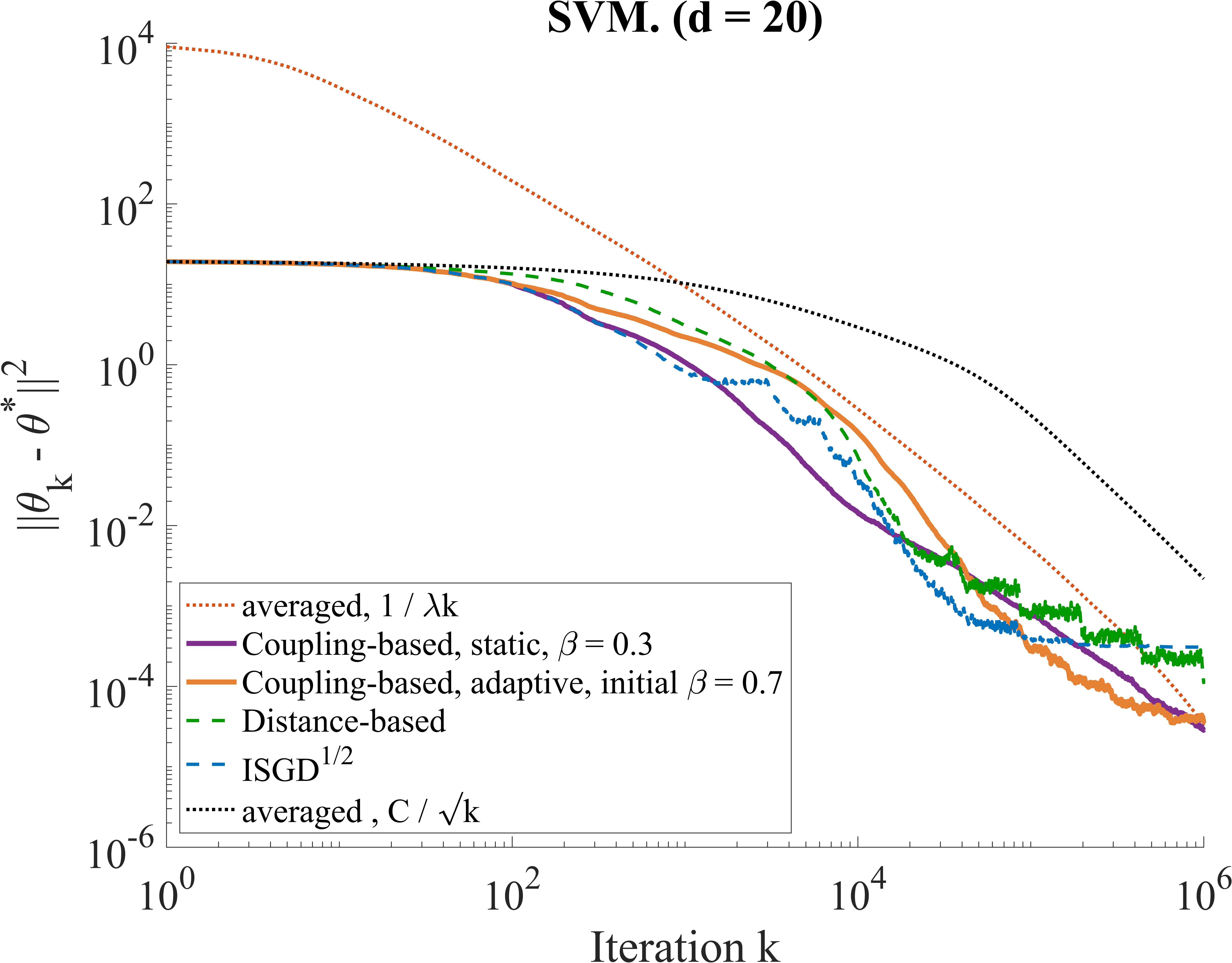}
    \caption{SVM with dimension $d = 20$. The initial stepsize of coupling/distance-based and $\text{ISGD}^{1/2}$ is $\gamma_0 = 4/R^2$. The errors are averaged over 10 replications.}
    \label{fig:svm}
\end{figure}

As shown in Figure \ref{fig:svm}, our coupling-based method achieves comparable performance to the averaged SGD with $\gamma_k = 1/\lambda k$ at the end. However, we observe that the latter one is compromised by a slow initial convergence rate. Additionally, we note that both the static and adaptive versions of our algorithm outperform the distance-based and the $\text{ISGD}^{1/2}$ method.

\paragraph{Uniformly convex $f$.} 
The objective function $f$ is given by $f(\theta) = \frac{1}{\rho}\|\theta\|^\rho_2$. We remark that $f$ is uniformly convex with parameter $\rho>2$ \cite{pesme2020distance}. We consider $\rho = 2.5$, and assume that the gradient noise $\xi_i$ are i.i.d.\ generated from $\mathcal{N}(0, \mathbf{I}_d)$ with $d=200$. We also consider SGD with stepsize $\gamma_k=1/{\sqrt{k}}$, which achieves the rate of $\mathcal{O}\left(\log(k)/\sqrt{k}\right)$ \cite{shamir2013stochastic}. In addition, we include SGD with stepsize $\gamma_k = k^{-1/(\tau+1)}$, which is conjectured to achieve the optimal rate of $\mathcal{O}(k^{-1/(\tau+1)}\log(k))$ \cite{pesme2020distance}, where $\tau = 1 - 2/\rho$.

\begin{figure}[ht]
    \centering
\includegraphics[width=0.95\columnwidth]{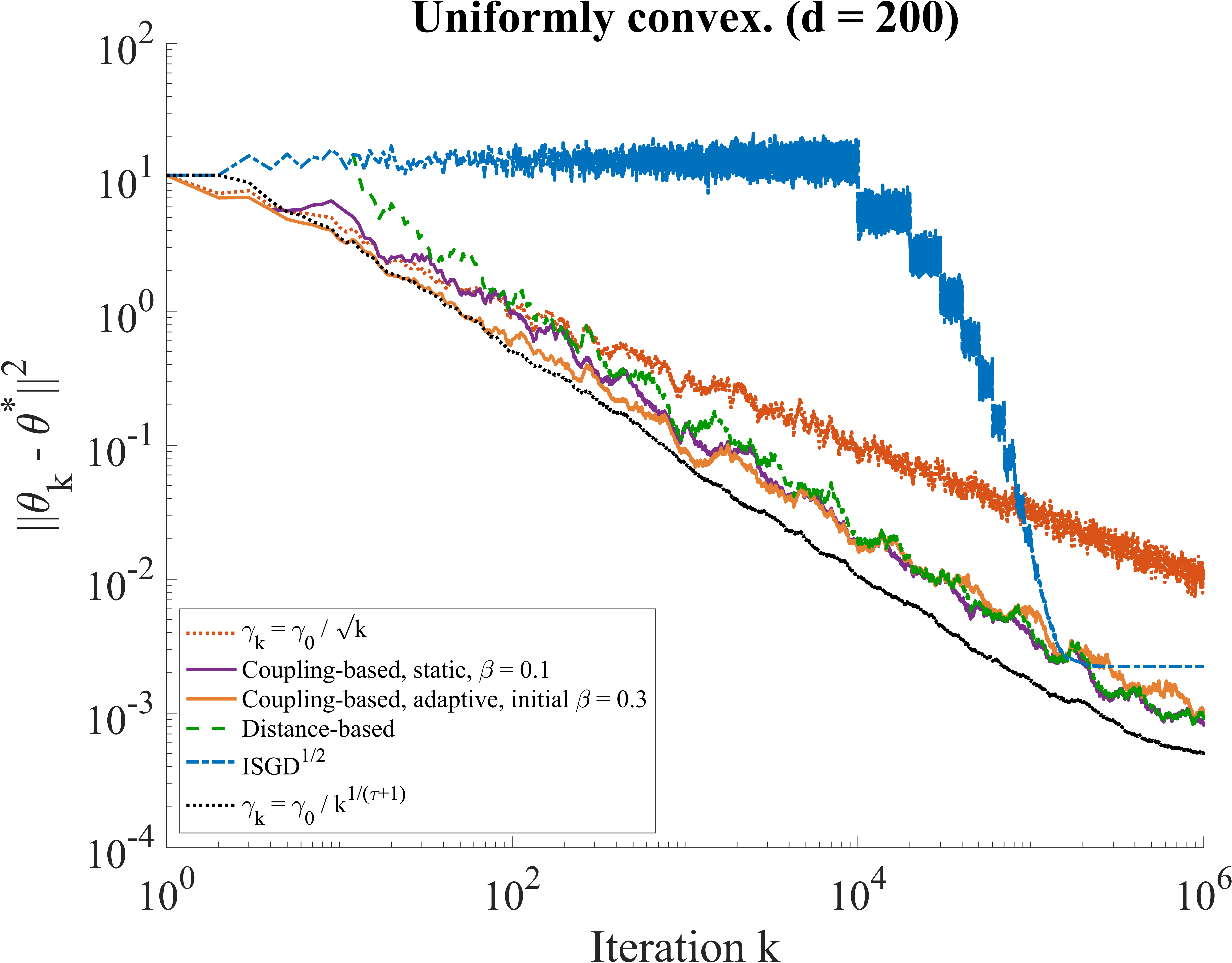}
    \caption{Uniformly convex objective with $d = 200$. The initial stepsize of coupling/distance-based and $\text{ISGD}^{1/2}$ is $\gamma_0 = 1/4L$. The errors are averaged over 10 replications.}
    \label{fig:uniformly}
\end{figure}

Figure \ref{fig:uniformly} shows that the Pflug's statistic-based $\text{ISGD}^{1/2}$ algorithm experiences delays in detecting convergence initially, resulting in the slowest convergence among all methods. We observe that the two variants of our coupling-based algorithm converge faster than other algorithms; moreover, our methods almost matches the optimal performance of SGD with a diminishing stepsize $\gamma_k = \gamma_0/k^{1/(\tau+1)}$.

\paragraph{Lasso regression.}
The objective function $f$ is given by $f(\theta) = \frac{1}{n}\sum_{i=1}^n(y_i-\langle x_i,\theta\rangle)^2 + \lambda\|\theta\|_1$, where $\lambda = 10^{-4}$. The inputs $x_i$ are i.i.d.\ generated in the same way as in the logistic model. The outputs $y_i$ are generated according to $y_i = \langle x_i\tilde\theta\rangle + \varepsilon_i$, where $\tilde{\theta}$ is an $s$-sparse vector with $s = 60$, and $\varepsilon_i$ are i.i.d.\ drawn from $\mathcal{N}(0, \sigma^2)$. We also evaluate SGD with stepsize $\gamma_k = 1/\sqrt{k}$, which achieves the optimal rate of $\mathcal{O}\left(\log(k)/\sqrt{k}\right)$ \cite{shamir2013stochastic}, and SGD with stepsize $\gamma_k=C/{\sqrt{k}}$ where $C$ is tuned to achieve the best performance. The initial stepsize for our algorithm, the distance-based method and the $\text{ISGD}^{1/2}$ are set as $1/2R^2.$

\begin{figure}[ht]
    \centering
\includegraphics[width=0.95\columnwidth]{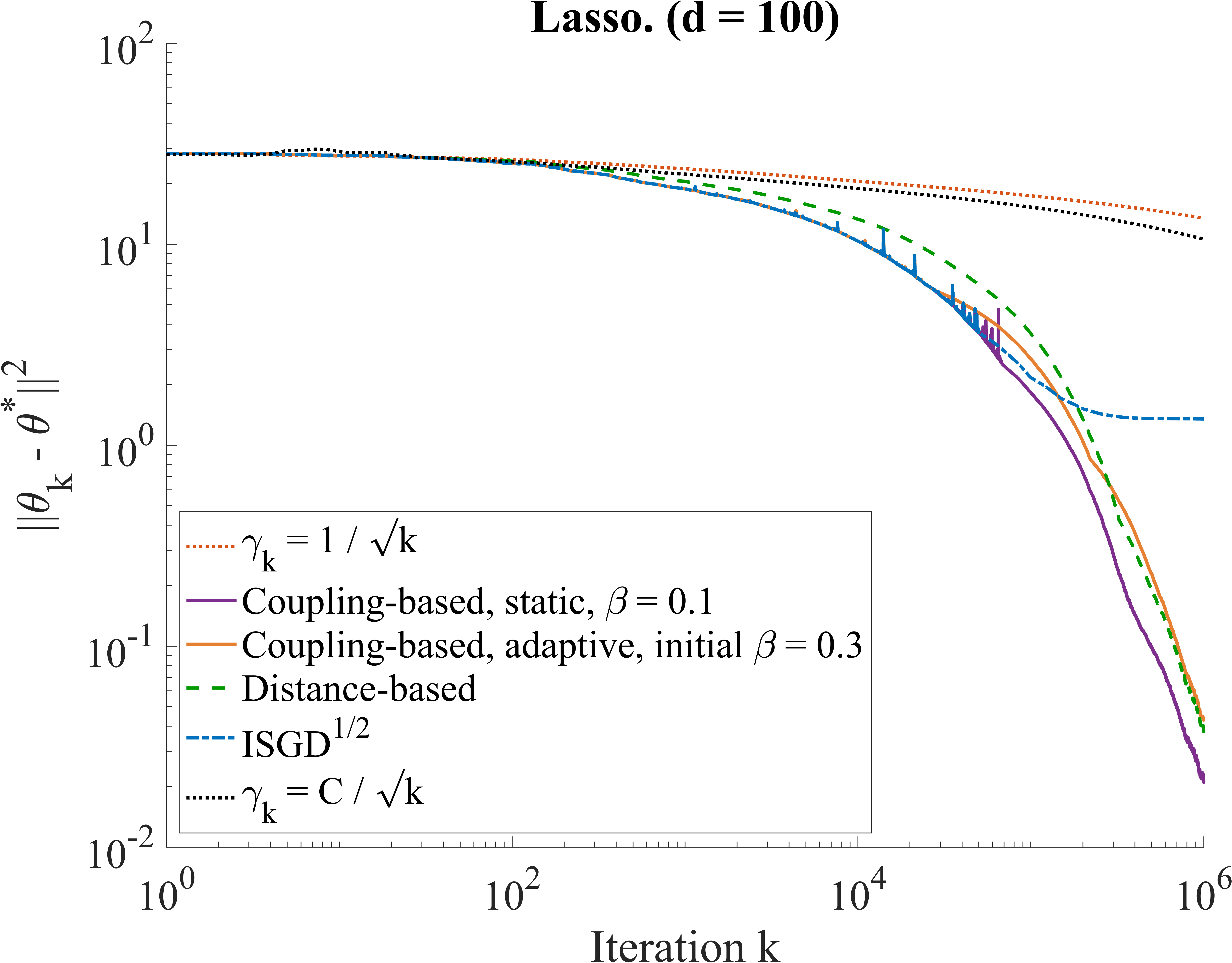}
    \caption{Lasso regression with dimension $d=100$. The initial stepsizes are $\gamma_0 = 1/2R^2$.}
    \label{fig:lasso}
\end{figure}

As shown in Figure \ref{fig:lasso}, both static and adaptive variants of our algorithm demonstrate superior performance. The $\text{ISGD}^{1/2}$ leads to a poor performance again with early saturation. 

\paragraph{Linear stochastic approximation with Markovian data.} Our last set of experiments consider the linear stochastic approximation (LSA) iteration driven by Markovian data: $\theta_{k+1}=\theta_k+\gamma\big( A(x_k)\theta_k+b(x_k)\big),$ where $(x_k)_{k\geq0}$ is a Markov chain denoting the underlying data stream, $A\in \R^{d\times d}$ and $b\in \R^d$ are deterministic functions. LSA aims to solve the linear point equation $\Bar{A}\theta^\star+\Bar{b}=0$ iteratively, where $\Bar{A}=\E_{x\sim \pi} [A(x)],\Bar{b}=\E_{x\sim \pi} [b(x)]$ with $\pi$ being the stationary distribution of the Markov chain $(x_k)_{k\geq0}.$  LSA covers the popular Temporal Difference (TD) learning algorithm in reinforcement learning and a variety of its variants, as well as SGD applied to a quadratic objective function. Recent work shows that LSA with a constant stepsize $\gamma>0$ also exhibits the transience-stationarity transition, with the saturation error in stationary phase induced by the Markovian data \cite{huo2023bias,lauand2023curse}. 

\begin{figure}[ht]
    \centering
\includegraphics[width=0.95\columnwidth]{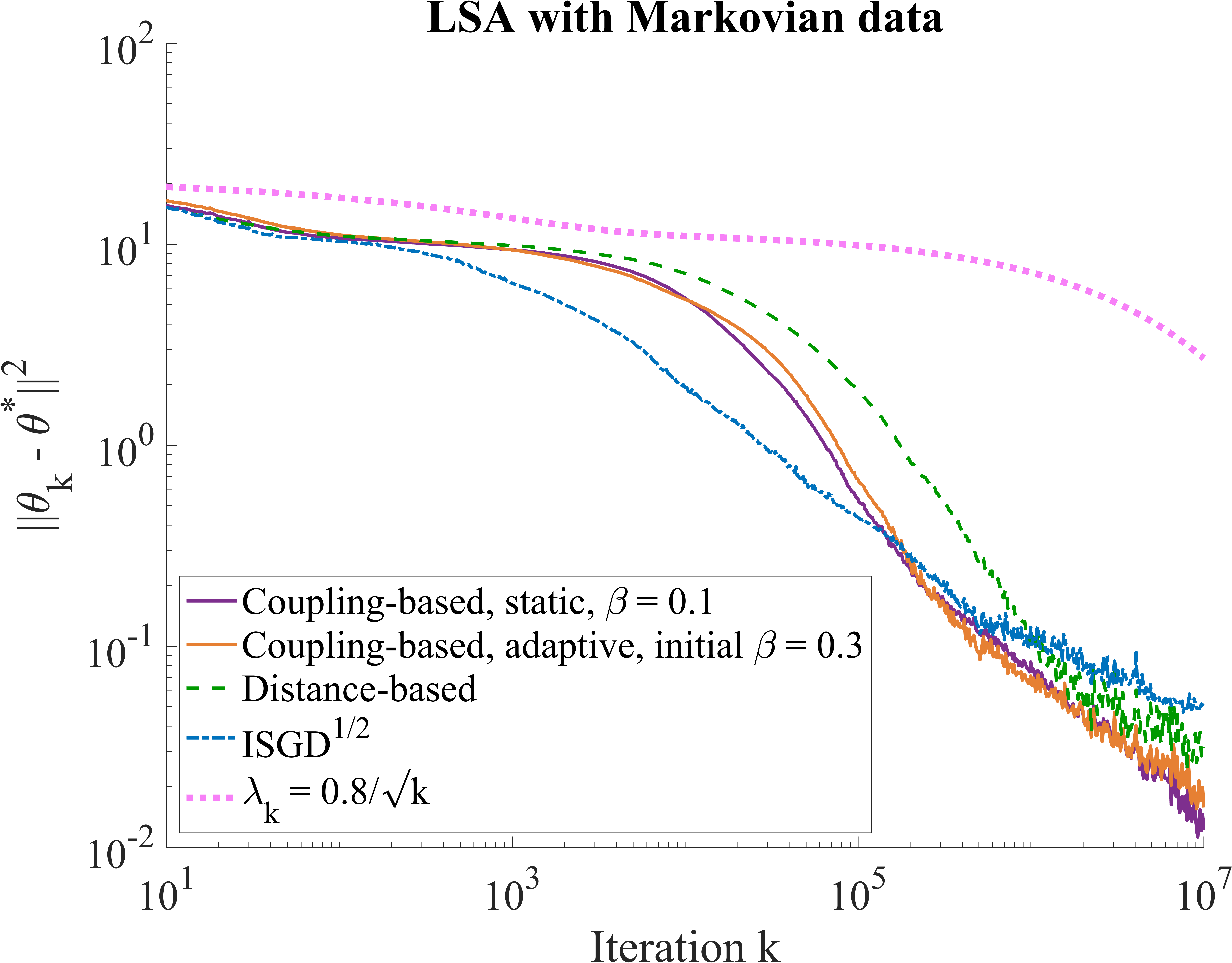}
    \caption{LSA with Markovian data stream.}
    \label{fig:LSA}
\end{figure}

We compare our algorithms with other approaches under the LSA setting with $d=5$, as shown in Figure \ref{fig:LSA}. We observe that the commonly employed diminishing stepsize $\gamma_k=1/\sqrt{k}$ converges slowly. Both the static and adaptive variants of our algorithms outperform the distance-based method. The $\text{ISGD}^{1/2}$ algorithm achieves fast convergence initially, but it is outperformed by our algorithms with increasing iterations. We remark that we tune the hyper-parameters of both the distance-based method and the $\text{ISGD}^{1/2}$ algorithm and present the best result here. Through our experiment, we find that $\text{ISGD}^{1/2}$ is very sensitive to its hyper-parameters, including the stepsize decay factor and the burnin parameter. In contrast, our algorithms achieve robust performance across a wide range of values for the hyper-parameters. 
Our results on LSA also showcase the applicability of our methods beyond the SGD setting.

\end{document}